\documentclass[conference]{IEEEtran}
\IEEEoverridecommandlockouts

\pdfoutput=1
\usepackage{hyperref}      
\usepackage{url}            
\usepackage{booktabs}       
\usepackage{amsfonts}      
\usepackage{nicefrac}      
\usepackage{microtype}      
\usepackage{amssymb}
\usepackage{graphicx}
\usepackage{enumerate}
\usepackage{amsthm}
\usepackage{amsmath,amsfonts}
\usepackage{mathtools}
\usepackage[ruled,vlined]{algorithm2e}
\usepackage{algpseudocode}
\usepackage{color}
\usepackage{accents}
\usepackage{float}
\usepackage{subfigure}
\usepackage{wrapfig,commath}
\usepackage{tikz}
\usepackage{booktabs}

\usepackage{minted}

\usepackage{cite}

\usepackage{adjustbox}
\usepackage{graphicx}

\usepackage{pifont}

\usepackage{multirow}
\usepackage[table]{colortbl}
\definecolor{lightblue}{rgb}{0.85, 0.95, 1}
\usepackage{pifont}
\usepackage{makecell}
\usepackage[table]{xcolor}

\newcommand{\bp}{\begin{proof}[\textbf{Proof}] \small}
\newcommand{\ep}{\end{proof} \normalsize}
\newcommand{\epx}{\end{proof} \small}
\newcommand{\bpa}{\begin{proofappx} \footnotesize}
\newcommand{\epa}{\end{proofappx} \small}
\newtheorem{theorem}{Theorem}
\newtheorem{assumption}{Assumption}
\newtheorem{proposition}{Proposition}
\newtheorem{corollary}{Corollary}
\newtheorem{remark}{Remark}
\newtheorem{lemma}{Lemma}

\newcommand{\be}{\begin{equation}}
\newcommand{\ee}{\end{equation}}
\newcommand{\bs}{\begin{subequations}}
\newcommand{\es}{\end{subequations}}
\newcommand{\bq}{\begin{eqnarray}}
\newcommand{\eq}{\end{eqnarray}}
\newcommand{\bqn}{\begin{eqnarray*}}
\newcommand{\eqn}{\end{eqnarray*}}

\newcommand{\ba}{\left[ \begin{array}}
\newcommand{\ea}{\\ \end{array} \right]}
\newcommand{\ben}{\begin{enumerate}}
\newcommand{\een}{\end{enumerate}}

\allowdisplaybreaks[4]
\begin{document}

\title{Collaborative In-Context Learning for Indoor \\Radio Map Estimation}

\author{
\IEEEauthorblockN{
Yuanzhe Peng\textsuperscript{1},
Jie Xu\textsuperscript{2}
}
\IEEEauthorblockA{
\textsuperscript{1}\textit{University of North Florida, Jacksonville, FL, USA}\\
\textsuperscript{2}\textit{University of Florida, Gainesville, FL, USA}
}
\thanks{
The work of Yuanzhe Peng (email: \texttt{y.peng@unf.edu}) was partially supported by UNF startup funding.
The work of Jie Xu was partially supported by the NSF under Grants 2433886, 2515982, and 2505381.
}
}

\IEEEaftertitletext{\vspace{-0.5\baselineskip}}% to decrease space between title and abstract

\maketitle

\begin{abstract}
Indoor radio map estimation (RME) is essential for wireless applications such as coverage planning, localization, and network deployment.
However, most existing methods follow a supervised learning paradigm that requires costly labeled data and task-specific model training.
Moreover, indoor environments involve privacy-sensitive information, such as building layouts and structures, that often cannot be centralized.
To address these challenges, we make the first attempt to formulate indoor RME as a collaborative in-context learning (ICL) problem.
We propose \textsc{Poly-Ripple}, a framework in which multiple clients perform ICL using large autoregressive vision models (LAVMs) without model updates.
Each client is prompted by its private demonstrations and predicts the radio map for the same scene query.
To capture the causal structure of wireless propagation outward from the transmitter (Tx), we adopt ripple-ordered autoregressive modeling.
We further introduce importance-aware aggregation to exploit heterogeneous contextual knowledge distributed across clients under the context length constraint.
We provide theoretical analysis and conduct extensive experiments on a public indoor RME benchmark.
The experimental results show that \textsc{Poly-Ripple} consistently outperforms representative ICL baselines, while aligning with our theoretical analysis.
\end{abstract}

\vspace{1mm}

\begin{IEEEkeywords}
In-Context Learning (ICL), Large Autoregressive Vision Model (LAVM),
Radio Map Estimation (RME), Foundation Models, Big Data Applications in Telecommunications.
\end{IEEEkeywords}

\IEEEpeerreviewmaketitle

\section{Introduction}
\label{Introduction}

Accurate and cost-effective indoor radio map estimation (RME) is essential for wireless applications such as coverage planning, localization, and network deployment~\cite{zhang2024physics, lei2026importance, feng2025recent}.
A radio map provides spatially resolved estimates of path loss (PL), which quantifies signal attenuation in decibels throughout an indoor environment for a given transmitter (Tx) location.
Indoor RME is particularly challenging because of complex propagation effects such as reflection and transmission.
Traditional \textit{model-driven} approaches, including the log-distance PL model and 3GPP channel models, offer physical interpretability but often generalize poorly across diverse building layouts.

More recently, \textit{data-driven} methods have formulated RME as an image-to-image supervised learning task.
Prior studies mainly improve indoor RME from three aspects:
(1) designing encoder-decoder architectures based on convolutional neural networks (CNNs) or vision transformers (ViTs)~\cite{levie2021radiounet, bakirtzis2024rigorous, bakirtzis2022deepray, zhang2023rme, romero2024theoretical, li2025transpathnet, sun2025channel, wang2025two, viet2025spatial};
(2) customizing loss functions~\cite{lu2025sip2net, wu2025enhancing, jaiswal2025leveraging, wang2025radiodiff, cisse2025generalizable}; and
(3) applying feature fusion or data augmentation techniques~\cite{feng2025ipp, jaensch2025radio, lee2024scalable}.
Despite these advances, most existing methods still follow the supervised learning paradigm and focus on improving task-specific models.

However, practical indoor deployment exposes fundamental limitations of supervised learning.
First, obtaining labeled PL maps requires costly ray-tracing simulations or labor-intensive field measurements for each new building, making task-specific training expensive and difficult to scale.
Second, building layouts and indoor structures contain privacy-sensitive information that often cannot be centralized across sites.
This restriction makes it difficult to train a single global model using data collected from different environments.
Therefore, although supervised learning has achieved promising performance, its practical deployment for indoor RME remains limited.

\begin{figure}
\centering
\includegraphics[
    width=0.45\textwidth,
    trim=290 170 300 110,
    clip
]{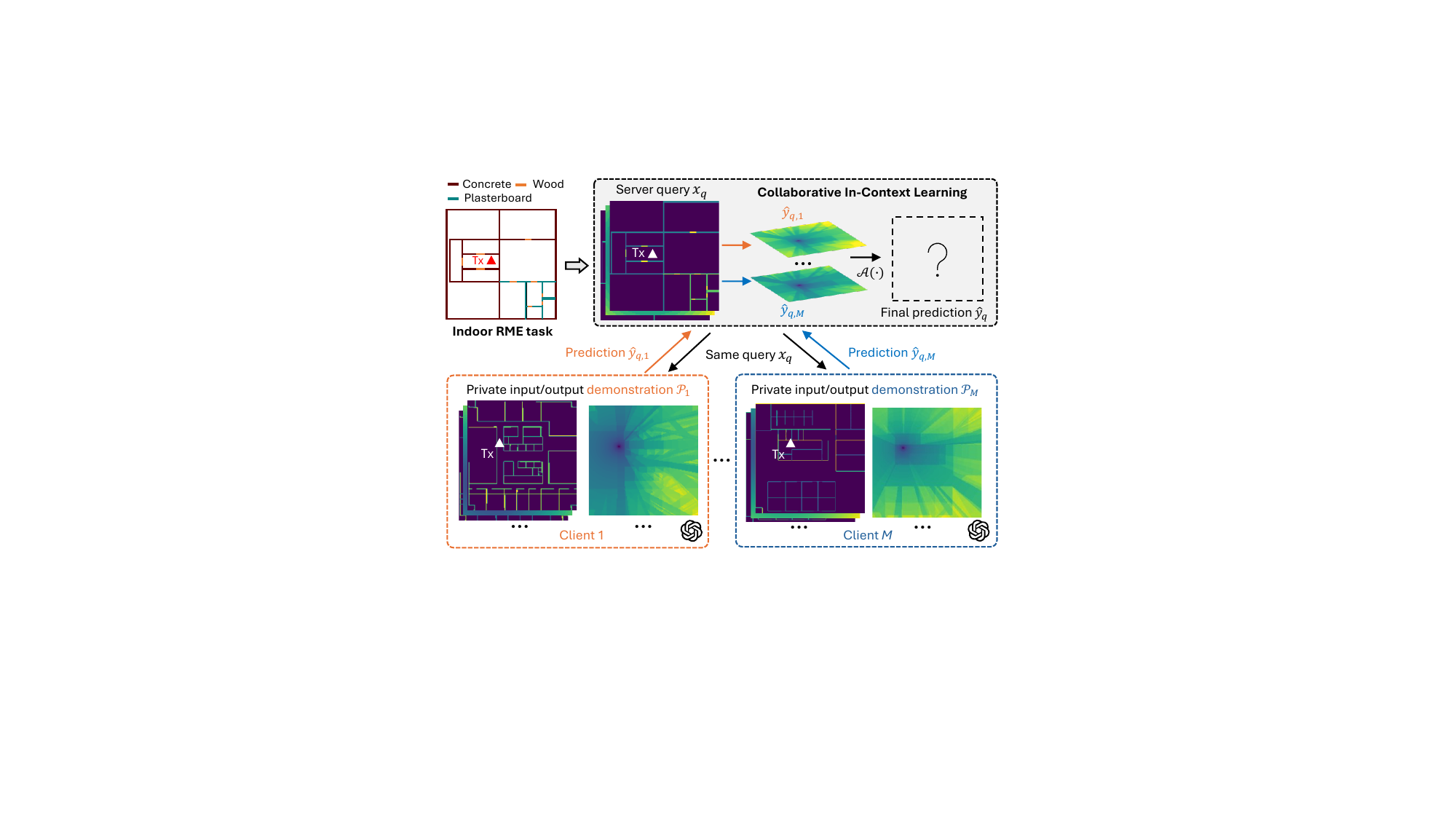}
\vspace{-4mm}
\caption{
Illustration of \textsc{Poly-Ripple}. Each client $m \in [M]$ performs ICL for the same server-issued query $\boldsymbol{x}_q$ using its private demonstration set $\mathcal{P}_{m}$. Under privacy and context length constraints, the server applies importance-aware aggregation to combine diverse client knowledge into the final prediction $\hat{\boldsymbol{y}}_q$.
}
\label{fig_case}
\vspace{-2mm}
\end{figure}

Recent advances in large language models (LLMs), such as GPTs, have demonstrated the ability to perform new tasks using only a few input-output demonstrations (or prompts), a capability known as in-context learning (ICL)~\cite{sia2024does}.
ICL eliminates the need for task-specific model updates and suggests a different approach to indoor RME.
Instead of training a model, a large autoregressive vision model (LAVM) can be prompted with a few relevant demonstrations to infer the radio map for a new scene query.
This property makes ICL particularly appealing for indoor RME, where labeled data are costly and rapid adaptation to unseen environments is desirable.
Meanwhile, indoor data are naturally distributed across buildings, floors, and organizations.
Each client maintains its own private layout and measurements, while sharing raw demonstrations may be infeasible because of privacy concerns.
Moreover, the limited context length of LAVMs restricts the number of demonstrations that can be processed within a single context.
These observations motivate the collaborative ICL setting shown in Fig.~\ref{fig_case}, where multiple clients contribute distributed contextual knowledge without sharing their private demonstrations.

\begin{figure}
\centering
  \includegraphics[width=0.47\textwidth, trim=307 200 320 165,clip]{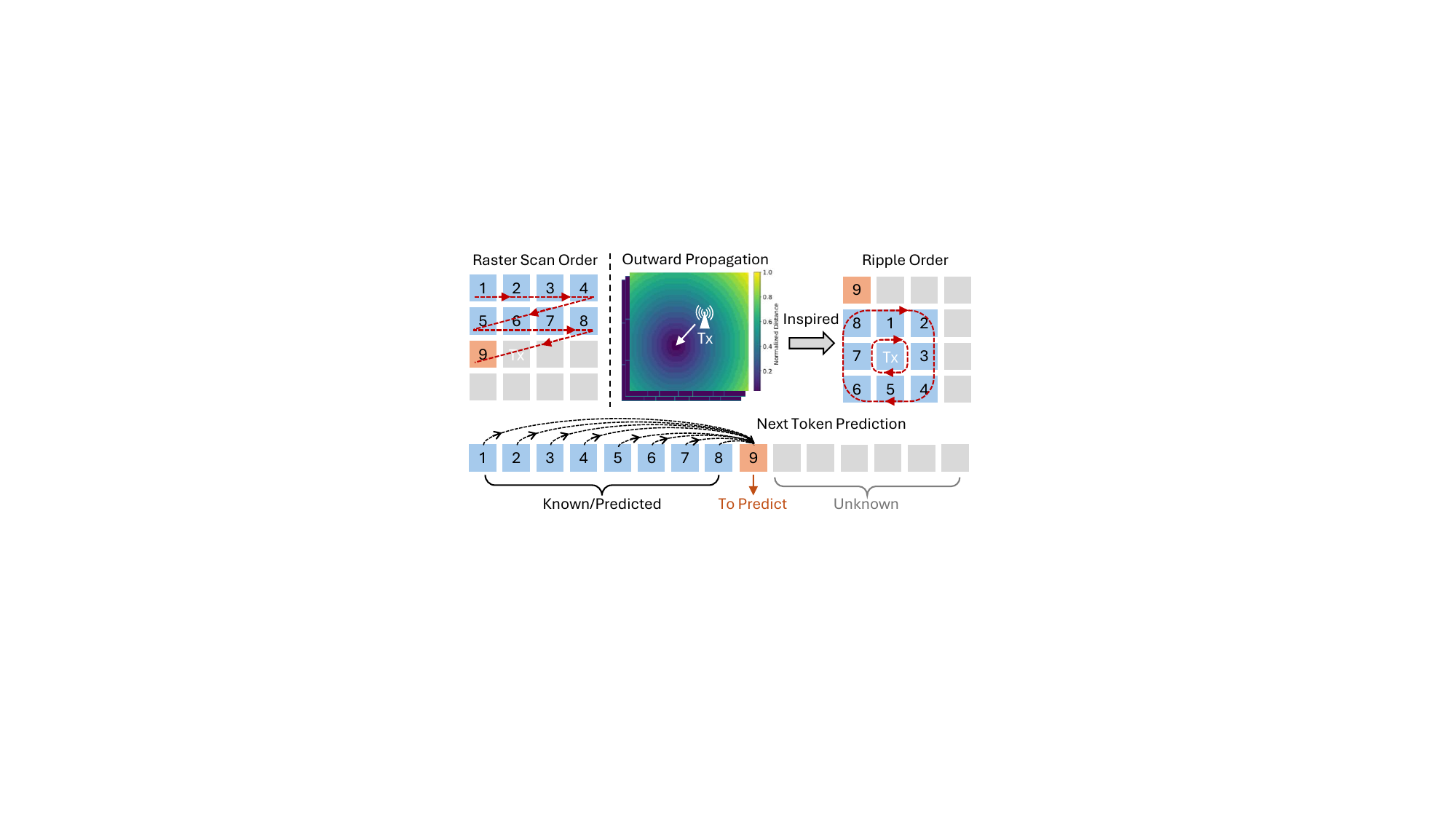}
  \vspace{-2mm}
\caption{
Raster scan order (left) is commonly used in visual autoregressive modeling \cite{xiongautoregressive}.
In contrast, inspired by the physical law of ripple-like outward signal propagation from a Tx in indoor radio map estimation, we adopt a ripple order (right) to better capture the causal structure for next-token prediction.
}
\vspace{-2mm}
  \label{fig_ripple}
\end{figure}

\textbf{Challenges.}
Despite its potential, applying collaborative ICL to indoor RME is nontrivial.
First, scene inputs and radio maps are continuous visual data and must be converted into meaningful discrete tokens before autoregressive modeling.
Second, most existing visual autoregressive models adopt raster-scan ordering, which fails to capture the causal structure of wireless propagation.
Signals propagate outward from the Tx and interact with increasingly distant obstacles, resembling ripples across still water and forming ripple-like spatial dependencies.
Third, practical deployment is distributed and subject to privacy and context length constraints.
Each client's demonstrations contain private building information and cannot be shared, while the maximum token limit allows only a few high-quality demonstrations in a single context.
The server thus requires an effective aggregation mechanism to combine heterogeneous client predictions into an accurate final estimate.

These challenges raise a \textbf{question}: \emph{how can we design a collaborative ICL framework that captures the causal structure of RME under privacy and context length constraints?}

In this paper, we propose \textsc{Poly-Ripple}, a collaborative ICL framework for indoor RME.
The term ``poly'' reflects the collaboration of multiple clients that generate ripple-structured local predictions for server-side aggregation.
On each client, \textsc{Poly-Ripple} constructs a high-quality demonstration set that considers both Tx location relevance and spatial diversity.
It then tokenizes continuous visual inputs into meaningful discrete tokens and arranges them in an outward ripple order.
As shown in Fig.~\ref{fig_ripple}, this physics-guided ordering enables the LAVM to capture the causal structure of wireless propagation for next-token prediction.
On the server side, \textsc{Poly-Ripple} applies importance-aware aggregation based on client-query similarity.
This design emphasizes predictions from more relevant demonstrations while leveraging complementary information across clients.
It exploits distributed contextual knowledge without sharing private indoor information while respecting the context length imposed by the maximum token limit.

\textbf{Contributions.}
Our contributions are summarized as follows.
\begin{itemize}
    \item We formulate indoor RME as a collaborative ICL problem beyond the conventional supervised learning paradigm.
    
    \item We propose \textsc{Poly-Ripple}, an ICL framework combining ripple autoregressive inference with importance-aware aggregation of distributed contextual knowledge.
    
    \item We provide theoretical analysis of estimation error bounds and multi-client collaboration gains.
    
    \item We conduct experiments on a public dataset, showing that \textsc{Poly-Ripple} outperforms representative ICL baselines. Experimental results align with our theoretical analysis.
\end{itemize}

\section{Related Work}
\label{sec:related_work}

\textbf{Radio Map Estimation.}
RME methods can be divided into model-driven and data-driven approaches.
Model-driven methods use analytical or physics-based models, including the log-distance PL model, 3GPP channel models, and ray-tracing simulations~\cite{feng2025recent}.
Although physically interpretable, they often generalize poorly across diverse indoor layouts.
Data-driven methods formulate RME as supervised learning from multi-channel scene inputs~\cite{levie2021radiounet, lee2024scalable, lu2025sip2net, feng2025ipp}.
They commonly adopt encoder-decoder architectures based on CNNs or ViTs~\cite{bakirtzis2024rigorous, viet2025spatial}.
Despite advances in model architectures, losses, and feature fusion, they still require costly labeled data and model training.

\textbf{In-Context Learning.}
ICL enables models to perform new tasks using a few input-output demonstrations without model updates~\cite{liu2023pre, dong2024survey, li2023practical, sun2024does, zhang2023makes, wies2023learnability, xu2024towards}.
It has been extended to visual tasks, where images are tokenized using VQ-VAE~\cite{razavi2019generating} or VQ-GAN~\cite{esser2021taming}, followed by autoregressive next-token prediction~\cite{xiongautoregressive}.
Visual ICL has shown promising performance on semantic segmentation, image inpainting, and other tasks~\cite{bai2024sequential}.
Recent studies have explored ICL for wireless applications~\cite{peng2026icc, peng2025context, tian2024visual}, but they do not address collaborative inference across distributed clients \cite{peng2024joint, peng2026distributed}, where private demonstrations cannot be shared and the maximum token limit constrains context length.

\section{Problem Formulation}
\label{sec:problem_formulation}

\textbf{Indoor RME Task.}
Consider an indoor environment discretized into a two-dimensional grid of size $H \times W$.
Let $\mathcal{N} = \{i\}_{i=1}^{N}$ denote the set of spatial coordinates, where $N = H \times W$.
Given a transmitter (Tx) location $i^{\mathrm{Tx}} \in \mathcal{N}$, indoor RME aims to predict the path loss value $\hat{y}^{i}$ at every location $i \in \mathcal{N}$, forming a radio map
$\hat{\boldsymbol{y}} = \{\hat{y}^{1},\ldots,\hat{y}^{N}\} \in \mathbb{R}^{H \times W}.$

The corresponding scene input is denoted by
$\boldsymbol{x} \in \mathbb{R}^{H \times W \times C}$,
where $C$ is the number of input channels.
Following the dataset used in this paper, each scene input contains three physically meaningful channels: the distance from each spatial location to the Tx, transmittance, and reflectance.
The transmittance and reflectance channels are determined by the material interfaces of the building layout and remain unchanged across different Tx locations within the same building layout.

\textbf{Conventional Supervised Formulation.}
Most existing RME methods adopt centralized supervised learning.
A model $f_{\theta}$ is trained to map a scene input $\boldsymbol{x}$ to the corresponding radio map,
$\hat{\boldsymbol{y}} = f_{\theta}(\boldsymbol{x}),$
where the parameters $\theta$ are optimized using a task-specific loss such as the root mean squared error.
Despite diverse designs in model architectures and loss functions, these methods generally require large labeled datasets and dedicated training for effective deployment in new environments.

\textbf{Our Collaborative ICL Formulation.}
We instead formulate indoor RME as a collaborative ICL problem with $M$ clients and a central server, enabling large-scale deployment without task-specific model training.
Each client is associated with a distinct building layout and maintains private input-output demonstrations collected from that layout.
In practice, these demonstrations cannot be centralized because they contain private information about building layouts and structures.
In addition, the LAVM has a fixed context window measured in tokens, which limits the number of demonstrations it can process.
Collaboration therefore enables distributed contextual knowledge to contribute to indoor RME without sharing private demonstrations or combining them all into one context.

Specifically, for each client $m \in \{1,\ldots,M\}$, let $\mathcal{N}_m$ denote the spatial coordinate set of its building layout.
Different Tx locations within the same layout produce different scene inputs and corresponding radio maps.
The local demonstration set of client $m$ is defined as
$\mathcal{P}_m = \{(\boldsymbol{x}_{m,k},\boldsymbol{y}_{m,k})\}_{k=1}^{K_m},$
where $(\boldsymbol{x}_{m,k},\boldsymbol{y}_{m,k})$ is an input-output demonstration associated with Tx location
$i_{m,k}^{\mathrm{Tx}} \in \mathcal{N}_m$.
These private demonstrations remain local and are not exchanged among clients or with the server.

The server, representing the user side, maintains a set of query scene inputs
$\{\boldsymbol{x}_q\}$,
where each query $\boldsymbol{x}_q$ corresponds to either a new building layout or a new Tx location.
The corresponding radio maps $\{\boldsymbol{y}_q\}$ are unavailable and must be estimated without costly task-specific model training.
The server therefore broadcasts the same query $\boldsymbol{x}_q$ to all clients.
Each client $m$ then performs ICL using an LAVM with frozen parameters $\omega$, prompted by its private demonstrations
$\mathcal{P}_m$,
\begin{align}
\scalebox{1}{$
\hat{\boldsymbol{y}}_{q,m}
=
f_{\omega}(\mathcal{P}_m,\boldsymbol{x}_q),
\quad m \in \{1,\ldots,M\}.
$}
\end{align}
Notably, the clients produce different predictions for the same query because they are prompted by different demonstrations.

The objective of collaborative ICL is to aggregate these local predictions into a final radio map $\hat{\boldsymbol{y}}_q$ under privacy and context length constraints. Specifically, the server computes
\begin{align}
\scalebox{1}{$
\hat{\boldsymbol{y}}_q
=
\mathcal{A}\big(
\hat{\boldsymbol{y}}_{q,1},
\ldots,
\hat{\boldsymbol{y}}_{q,M}
\big),
$}
\end{align}
where $\mathcal{A}(\cdot)$ denotes an aggregation operator.
For example, weighted aggregation is given by
$\hat{\boldsymbol{y}}_q = \sum_{m=1}^{M}\alpha_m\hat{\boldsymbol{y}}_{q,m},$
where $\alpha_m \geq 0$ and $\sum_{m=1}^{M}\alpha_m = 1$.
The weights reflect the relevance of each client's demonstrations to the query, such as the similarity between a representative client scene and $\boldsymbol{x}_q$.

\begin{figure*}
\centering
  \includegraphics[width=0.98\textwidth, trim=250 210 210 185, clip]{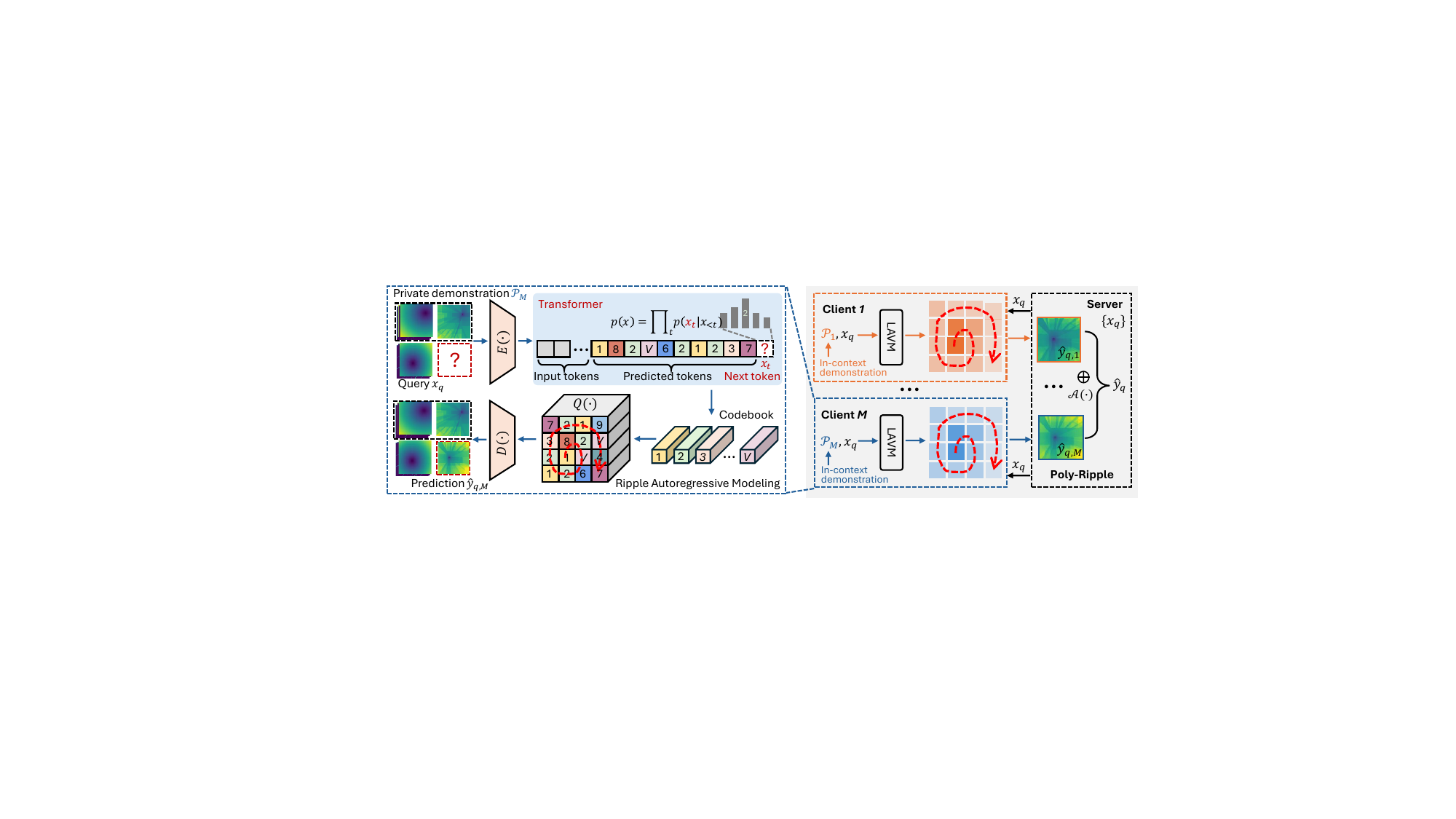}
  \vspace{-2mm}
\caption{
Each client $m \in [M]$ locally performs ICL using an LAVM, which consists of a VQ-GAN encoder and decoder for tokenization and reconstruction, and a transformer for next-token prediction.
The server aggregates the ripple-structured local predictions via $\mathcal{A}(\cdot)$ to collaboratively produce the final radio map.
}
  % \vspace{-1mm}
  \label{fig_framework}
\end{figure*}

\section{Methodology}
\label{sec:methodology}

We present the \textsc{Poly-Ripple} framework for indoor RME. 
On the client side, each client performs query dependent demonstration construction, visual tokenization, ripple ordered sequence generation, and autoregressive prediction using an LAVM. On the server side, the local predictions are aggregated using importance-aware weights, as shown in Fig.~\ref{fig_framework}.

\subsection{Client-Side In-Context Learning}
\label{sec:client_side}

Each client $m$ corresponds to a building layout and stores its demonstrations locally. Upon receiving a query $\boldsymbol{x}_q$ broadcast by the server, client $m$ performs the following four steps.

\textbf{Local Demonstration Construction.} Each client $m \in [M]$ receives the query input $\boldsymbol{x}_q$ with transmitter location $i_q^{\mathrm{Tx}}$. Let $\mathcal{D}_m=\{(\boldsymbol{x}_{m,i},\boldsymbol{y}_{m,i},i_{m,i}^{\mathrm{Tx}})\}_{i=1}^{N_m}$ denote its local dataset, where $i_{m,i}^{\mathrm{Tx}} \in \mathcal{I}_{\mathrm{Tx}}$ is a candidate transmitter position. Due to the context length constraint, client $m$ selects at most $K$ demonstrations. It first chooses the position aligned anchor $k_1=\arg\min_i\|i_{m,i}^{\mathrm{Tx}}-i_q^{\mathrm{Tx}}\|_2$, whose transmitter geometry is closest to the query. It then applies K-means clustering to $\mathcal{I}_{\mathrm{Tx}}$, forming $K$ disjoint clusters $\{\mathcal{C}_k\}_{k=1}^{K}$ with centroids $\{\mu_k\}_{k=1}^{K}$, and selects $i_{\mathrm{Tx}}^k=\arg\min_{i\in\mathcal{C}_k}\|i-\mu_k\|_2$ from each cluster. This yields $\mathcal{K}_{\mathrm{Tx}}=\{i_{\mathrm{Tx}}^1,\ldots,i_{\mathrm{Tx}}^K\}$. If the anchor is not included, the representative of the cluster containing $i_{m,k_1}^{\mathrm{Tx}}$ is replaced by the anchor. Finally, client $m$ constructs $\mathcal{P}_{q,m}=\{(\boldsymbol{x}_{m,k},\boldsymbol{y}_{m,k})\}_{k\in\mathcal{K}_{\mathrm{Tx}}}$. All operations remain local. The anchor preserves positional relevance, while the remaining representatives provide spatial diversity.

\textbf{Visual Tokenization.}
After constructing the query dependent demonstration set $\mathcal{P}_{q,m}$, client $m$ converts the demonstrations and query into discrete tokens for causal sequence modeling. We use a pretrained VQ-GAN with a fixed codebook $\mathcal{Z}$, consisting of an encoder $E(\cdot)$, a quantizer $Q(\cdot)$, and a decoder $D(\cdot)$. For any input $\boldsymbol{x}$, the encoder produces a latent tensor $\hat{\boldsymbol{z}} = E(\boldsymbol{x}) \in \mathbb{R}^{h \times w \times b}$, where $h = H / f$, $w = W / f$, $f$ is the downsampling factor, and $b$ is the embedding dimension. Each latent vector $\hat{\boldsymbol{z}}^{i,j}$ is mapped to the index of its nearest codeword in $\mathcal{Z} = \{ \boldsymbol{z}_{v} \}_{v=1}^{V} \subset \mathbb{R}^{b}$ according to
$Q(\hat{\boldsymbol{z}}^{i,j})
=
\arg\min_{v \in [V]}
\bigl\|
\hat{\boldsymbol{z}}^{i,j}
-
\boldsymbol{z}_{v}
\bigr\|.$
This produces a discrete token grid $\boldsymbol{z} \in \{1,\ldots,V\}^{h \times w}$. Client $m$ applies the tokenizer to all inputs and outputs in $\mathcal{P}_{q,m}$ and to the query input $\boldsymbol{x}_{q}$, yielding their corresponding token grids.

\textbf{Ripple Ordering.}
To reflect the causal structure of wireless propagation, tokens are organized in a ripple order rather than a raster scan order. Specifically, tokens are arranged by increasing radial distance from the transmitter, so autoregressive prediction proceeds from nearby regions to farther regions.

For a demonstration pair $(\boldsymbol{z}^{\mathrm{in}}_{k}, \boldsymbol{z}^{\mathrm{out}}_{k})$ with transmitter location $i^{\mathrm{Tx}}_{k}$ mapped to the latent token grid, the grid positions are represented as $\tilde{\mathcal{I}}_{k} = \{ \tilde{i}_{k,t} \}_{t=1}^{T}$, where $T = h \cdot w$. The radial distance of each position is
$r_{k,t}
=
\bigl\|
\tilde{i}_{k,t}
-
i_{k}^{\mathrm{Tx}}
\bigr\|_{2}.$
Given increasing radial thresholds $0=r_0<r_1<\cdots<r_{L_{\max}}$, the $\ell$th ripple layer is defined as
\begin{align}
\scalebox{0.95}{$
\mathcal{R}^{\ell}_{k}
=
\bigl\{
\tilde{i}_{k,t} \in \tilde{\mathcal{I}}_{k}
\;\big|\;
r_{\ell-1}
<
\|
\tilde{i}_{k,t}
-
i_{k}^{\mathrm{Tx}}
\|_{2}
\le
r_{\ell}
\bigr\},
\ell \in [L_{\max}].
$}
\end{align}
Let $R_k^\ell=|\mathcal{R}_k^\ell|$. Within each ripple layer, input and output tokens at the same position are interleaved to associate environmental context with local propagation behavior. For $\tilde{i}_{\ell,j} \in \mathcal{R}^{\ell}_{k}$, the interleaved token pair is
$\boldsymbol{s}^{\ell,j}_{k}
=
[
\boldsymbol{z}^{\mathrm{in}}_{k}(\tilde{i}_{\ell,j}),
\boldsymbol{z}^{\mathrm{out}}_{k}(\tilde{i}_{\ell,j})
].$
The sequence for layer $\ell$ is
\begin{align}
\boldsymbol{s}^{\ell}_{k}
=
[
\boldsymbol{z}^{\mathrm{in}}_{k}(\tilde{i}_{\ell,1}),
\boldsymbol{z}^{\mathrm{out}}_{k}(\tilde{i}_{\ell,1}),
\ldots,
\boldsymbol{z}^{\mathrm{in}}_{k}(\tilde{i}_{\ell,R^{\ell}_{k}}),
\boldsymbol{z}^{\mathrm{out}}_{k}(\tilde{i}_{\ell,R^{\ell}_{k}})
].
\label{eq:interleaved_pairs}
\end{align}
We concatenate the layers in ascending radial order and append a separator token after each layer, including empty layers, to maintain a consistent sequence structure. The sequence for demonstration $k$ is
$\boldsymbol{s}_{k}
=
[
\boldsymbol{s}^{1}_{k},\mathrm{SEP},
\boldsymbol{s}^{2}_{k},\mathrm{SEP},
\ldots,
\boldsymbol{s}^{L_{\max}}_{k},\mathrm{SEP}
].$
Finally, client $m$ concatenates all selected demonstration sequences to construct
$\mathcal{P}^{*}_{q,m}
=
[
\boldsymbol{s}_{k_{1}},
\boldsymbol{s}_{k_{2}},
\ldots,
\boldsymbol{s}_{k_{K_{m}}}
].$

\textbf{Autoregressive Prediction.}
Given the ripple ordered prompt context $\mathcal{P}^{*}_{q,m}$ and query input tokens $\boldsymbol{z}^{\mathrm{in}}_{q}$, client $m$ performs autoregressive prediction using a causal transformer with fixed parameters. The transformer sequentially generates the client specific output tokens $\boldsymbol{z}^{\mathrm{out}}_{q,m}$, conditioned on the prompt context, the query input, and previously generated output tokens. Formally, the joint likelihood factorizes as
\begin{align}
\scalebox{0.9}{$
p\bigl(
\boldsymbol{z}^{\mathrm{out}}_{q,m}
\mid
\mathcal{P}^{*}_{q,m},
\boldsymbol{z}^{\mathrm{in}}_{q}
\bigr)
=
\prod_{t=1}^{T}
p(
z^{\mathrm{out}}_{q,m,t}
|
\boldsymbol{z}^{\mathrm{out}}_{q,m,1:(t-1)},
\mathcal{P}^{*}_{q,m},
\boldsymbol{z}^{\mathrm{in}}_{q}
), 
$}
\end{align}
where $T=h\cdot w$ is the number of tokens in the latent grid, and $z^{\mathrm{out}}_{q,m,t}$ is generated at the $t$th position in the ripple ordered sequence. The causal attention mask prevents access to future output tokens, so the generated outputs follow the outward ripple order while conditioning on the observed query input and prompt context. The generated output $\boldsymbol{z}^{\mathrm{out}}_{q,m}$ is rearranged into a discrete token grid in $\{1,\ldots,V\}^{h\times w}$. A lookup operator $Z(\cdot)$ maps each token index to its codeword in $\mathcal{Z}=\{\boldsymbol{z}_v\}_{v=1}^{V}$, and the VQ-GAN decoder reconstructs the local radio map as
\begin{align}
\scalebox{1}{$
\hat{\boldsymbol{y}}_{q,m}
=
D \bigl( Z(\boldsymbol{z}^{\mathrm{out}}_{q,m}) \bigr), \quad \forall\, m\in[M].
$}
\end{align}
Finally, only $\hat{\boldsymbol{y}}_{q,m}$ is transmitted, while the private demonstrations and intermediate token sequences remain stored locally.

\subsection{Server-Side Importance-Aware Aggregation}
\label{sec:server_side}

After all clients generate local predictions using their private prompts, the server aggregates them to obtain the final estimate. Uniform averaging is straightforward, but it may be suboptimal because the local prompt sets can differ substantially in their relevance to the query. We thus express the aggregation rule as
\begin{align}
\scalebox{1}{$
\hat{\boldsymbol{y}}_{q}
=
\mathcal{A}\big(
\hat{\boldsymbol{y}}_{q,1}, \ldots, \hat{\boldsymbol{y}}_{q,M}
\big),
$}
\end{align}
where $\mathcal{A}(\cdot)$ is a weighted aggregator that assigns each client an importance weight based on its relevance to the query. Let $\boldsymbol{x}_{m}$ denote a representative scene input of client $m$, such as the anchor selected for the current query. Given the query scene $\boldsymbol{x}_{q}$, their cosine similarity is
$S_{m}
=
\frac{
\langle
\phi(\boldsymbol{x}_{m}),
\phi(\boldsymbol{x}_{q})
\rangle
}{
\|
\phi(\boldsymbol{x}_{m})
\|_{2}
\,
\|
\phi(\boldsymbol{x}_{q})
\|_{2}
},$
where $\phi(\cdot)$ is a shared scene encoder. This score measures the structural relevance of client $m$ to the query. The embedding $\phi(\boldsymbol{x}_{m})$ or the resulting score $S_m$ is computed locally and transmitted to the server, while the original scene remains private.
The similarity scores are converted into aggregation weights through a temperature scaled softmax function,
\begin{align}
% \scalebox{1}{$
\alpha_{m}
=
\frac{
\exp(S_{m}/\tau)
}{
\sum_{m'=1}^{M} \exp(S_{m'}/\tau)
}, \quad \forall\, m\in[M],
% $}
\end{align}
where $\tau>0$ controls the concentration of the weights. A smaller $\tau$ emphasizes the most relevant clients, whereas a larger $\tau$ distributes the weights more evenly.

The final radio map prediction for server query $\boldsymbol{x}_{q}$ is
\begin{align}
\scalebox{1}{$
\hat{\boldsymbol{y}}_{q}
=
\sum\nolimits_{m=1}^{M}
\alpha_{m}
\hat{\boldsymbol{y}}_{q,m}.
$}
\end{align}

This importance-aware aggregation emphasizes predictions from relevant clients while reducing the influence of less informative ones. Other relevance measures, including confidence based or hybrid criteria, can also be incorporated without changing the overall framework.

\section{Theoretical Analysis}
\label{sec:theoretical}

\subsection{Estimation Error Analysis}
\label{subsec:estimation-error}

We introduce a surrogate model to characterize the implicit adaptation induced by contextual conditioning in ICL. 
We consider a common server-issued query broadcast to all clients, whose private local demonstrations may produce different predictions. Each client $m \in [M]$ observes $K$ independent and identically distributed demonstrations $\{(\boldsymbol{x}_{k}, \boldsymbol{y}^{m}_{k})\}_{k=1}^K$. For any subset $\mathcal{S} \subseteq [d] \triangleq \{1,\ldots,d\}$, let $\boldsymbol{x}_{\mathcal{S}} \in \mathbb{R}^{|\mathcal{S}|}$ denote the restriction of the input vector $\boldsymbol{x}$ to the coordinates indexed by $\mathcal{S}$. We begin by making the following mild assumptions.

\begin{assumption}
% [Shared linear conditional mean across clients]
\label{ass:linear-multi}
For every output coordinate $i \in [d]$ and every subset $\mathcal{S} \subseteq [d]$,
there exist coefficients
$w^{\star}_{i,\mathcal{S}} \in \mathbb{R}^{|\mathcal{S}|}$
and client-specific fluctuation terms
$\epsilon^{m}_{i}$
such that, for all clients $m \in [M]$,
\begin{align}
\label{eq:linear-multi-clean}
\scalebox{0.95}{$
y^{m}_{i}
=
\boldsymbol{x}_{\mathcal{S}}^{\top} w^{\star}_{i,\mathcal{S}}
+
\epsilon^{m}_{i},
\
\mathbb{E}\!\left[\epsilon^{m}_{i} | \boldsymbol{x}_{\mathcal{S}}\right] = 0,
\
\mathrm{Var}\!\left(\epsilon^{m}_{i} | \boldsymbol{x}_{\mathcal{S}}\right) = \sigma_i^2.
$}
\end{align}
\end{assumption}

\begin{assumption}
\label{ass:isotropic-multi}
The inputs are standardized with zero mean and identity covariance:
$\mathbb{E}[\boldsymbol{x}] = 0,$
$\mathrm{Cov}(\boldsymbol{x}) = I_d .$
\end{assumption}

\begin{assumption}
\label{ass:monotone-sigma-multi}
There exists a physical ordering of output coordinates by increasing distance to the transmitter.
Without loss of generality, indices are ordered so that $i=1$ corresponds to the closest location and $i=d$ to the farthest.
The dynamic range levels satisfy
$\sigma_1^2 \ge \sigma_2^2 \ge \cdots \ge \sigma_d^2.$
\end{assumption}

\begin{assumption}
\label{ass:delta-const-multi}
For any client $m \in [M]$ and any two coordinate subsets $\mathcal{S} \neq \mathcal{S}'$, the initialization error satisfies
\begin{align}
% \scalebox{1}{$
\|\delta_{\mathcal{S},m}\|_2^2
=
\|\delta_{\mathcal{S}',m}\|_2^2
=
\|\delta_m\|_2^2.   
% $}
\end{align}
\end{assumption}

Assumption~\ref{ass:linear-multi} models the shared propagation relationship through $w^{\star}_{i,\mathcal{S}}$, while $\epsilon_i^m$ captures variations induced by different local contexts. The coordinate-specific variance $\sigma_i^2$ represents the signal dynamic range. Assumption~\ref{ass:isotropic-multi} standardizes the input covariance, while the proof of Lemma~\ref{lemma:onestep-error-multi} additionally uses the isotropic fourth-moment identity specified in the proof section. Assumption~\ref{ass:monotone-sigma-multi} reflects the decreasing signal variability from the Tx outward and motivates ripple-ordered estimation. Finally, Assumption~\ref{ass:delta-const-multi} characterizes each client's initialization mismatch by $\|\delta_m\|_2^2$, thereby isolating the effect of the estimation order.

\begin{lemma}[One-step estimation error on client $m$]
\label{lemma:onestep-error-multi}
Under Assumptions~\ref{ass:linear-multi} and~\ref{ass:isotropic-multi},
for any client $m \in [M]$, any output coordinate $i \in [d]$, and any subset $\mathcal{S} \subseteq [d]$,
after one (implicit) update step with step size $\eta$ using all $K$ demonstrations on client $m$,
the expected squared estimation error satisfies
\begin{align}
\mathbb{E}\Bigl[
\bigl(
y_i^{m}
-
\boldsymbol{x}_{\mathcal{S}}^{\top} w^{1}_{i,\mathcal{S}}
\bigr)^2
\Bigr]
&\le
\Bigl(1 + \eta^{2}\tfrac{|\mathcal{S}|}{K}\Bigr)\sigma_i^2
+ (1-\eta)^2 \|\delta_{\mathcal{S},m}\|_2^2
\nonumber\\
&\quad
+ \eta^{2}\tfrac{|\mathcal{S}|(|\mathcal{S}|+1)}{K}\,
\|\delta_{\mathcal{S},m}\|_2^2 .
\label{eq:lemma1-multi}
\end{align}
\end{lemma}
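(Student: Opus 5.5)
The plan is to make the "implicit update step" concrete as one step of gradient descent on the client-$m$ empirical squared loss, and then split the resulting test error into a fresh-noise term, an initialization-mismatch term and a sampling-noise term. Fix $m$, $i$ and $\mathcal{S}$. Write $s=|\mathcal{S}|$ and let $w^{0}_{i,\mathcal{S}}$ be the initialization, with $\delta_{\mathcal{S},m}=w^{0}_{i,\mathcal{S}}-w^{\star}_{i,\mathcal{S}}$. Define the empirical covariance $\hat\Sigma=\frac1K\sum_{k}\boldsymbol{x}_{k,\mathcal{S}}\boldsymbol{x}_{k,\mathcal{S}}^\top$ and the noise vector $g=\frac1K\sum_k \boldsymbol{x}_{k,\mathcal{S}}\epsilon^m_{k,i}$. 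One step with step size $\eta$ gives $w^{1}=w^{0}-\frac{\eta}{K}\sum_k \boldsymbol{x}_{k,\mathcal{S}}(\boldsymbol{x}_{k,\mathcal{S}}^\top w^0-y^m_{k,i})$. Substituting Assumption~\ref{ass:linear-multi} for $y^m_{k,i}$ yields the error recursion
\begin{align*}
w^{1}-w^{\star}_{i,\mathcal{S}}=(I-\eta\hat\Sigma)\,\delta_{\mathcal{S},m}+\eta\, g .
\end{align*}

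\textbf{Step 1 (decouple the fresh query).} The evaluation pair $(\boldsymbol{x},y^m_i)$ is independent of the $K$ demonstrations. Hence $y^m_i-\boldsymbol{x}_{\mathcal{S}}^\top w^1=\epsilon^m_i-\boldsymbol{x}_{\mathcal{S}}^\top(w^1-w^\star)$. The cross term vanishes because $\mathbb{E}[\epsilon^m_i\mid\boldsymbol{x}_{\mathcal{S}}]=0$. Assumption~\ref{ass:isotropic-multi} gives $\mathrm{Cov}(\boldsymbol{x}_{\mathcal{S}})=I_s$. So the expected error equals $\sigma_i^2+\mathbb{E}\|w^1-w^\star\|_2^2$.

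\textbf{Step 2 (split the parameter error).} Expand $\mathbb{E}\|(I-\eta\hat\Sigma)\delta+\eta g\|^2$. The cross term $\mathbb{E}[\delta^\top(I-\eta\hat\Sigma)g]$ is zero: conditioning on the inputs, each summand of $g$ has mean zero. For the noise term, independence across $k$ and the conditional variance $\sigma_i^2$ give $\mathbb{E}\|g\|^2=\frac{1}{K}\mathbb{E}\|\boldsymbol{x}_{\mathcal{S}}\|^2\sigma_i^2=\frac{s}{K}\sigma_i^2$. This produces the $\eta^2\frac{s}{K}\sigma_i^2$ term. For the bias term, write $I-\eta\hat\Sigma=(1-\eta)I-\eta(\hat\Sigma-I)$. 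Since $\mathbb{E}\hat\Sigma=I$, the cross part vanishes, leaving
\begin{align*}
(1-\eta)^2\|\delta\|^2+\eta^2\,\mathbb{E}\|(\hat\Sigma-I)\delta\|^2 .
\end{align*}

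\textbf{Step 3 (fourth moment; the main obstacle).} The remaining task is to control $\mathbb{E}\|(\hat\Sigma-I)\delta\|^2=\frac1K\bigl(\mathbb{E}\|\boldsymbol{x}_{\mathcal{S}}\boldsymbol{x}_{\mathcal{S}}^\top\delta\|^2-\|\delta\|^2\bigr)$, using independence of the $K$ samples. Second moments do not determine this quantity, which is why the proof needs the isotropic fourth-moment identity. I would invoke the Gaussian-type identity
\begin{align*}
\mathbb{E}[\boldsymbol{x}\boldsymbol{x}^\top A\boldsymbol{x}\boldsymbol{x}^\top]=A+A^\top+\mathrm{tr}(A)I .
\end{align*}
With $A=\delta\delta^\top$, this gives $\mathbb{E}\|\boldsymbol{x}_{\mathcal{S}}\boldsymbol{x}_{\mathcal{S}}^\top\delta\|^2=(s+2)\|\delta\|^2$, so the bias fluctuation equals $\frac{s+1}{K}\|\delta\|^2$. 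This is at most the claimed $\frac{s(s+1)}{K}\|\delta\|^2$ whenever $s\ge1$; the case $s=0$ is trivial. If only a weaker bound such as $\mathbb{E}\|\boldsymbol{x}_{\mathcal{S}}\|^4\le s(s+2)$ is assumed instead, I would use Cauchy–Schwarz, $\|\boldsymbol{x}\boldsymbol{x}^\top\delta\|^2\le\|\boldsymbol{x}\|^4\|\delta\|^2$. That gives $\frac{s(s+2)-1}{K}\|\delta\|^2\le\frac{s(s+1)+s}{K}\|\delta\|^2$. Matching the exact constant then depends on which fourth-moment identity is stipulated, so this is where I expect to spend the care. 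Summing Steps 1–3 yields \eqref{eq:lemma1-multi}.
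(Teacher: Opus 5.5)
Your proof is correct and follows the paper's decomposition step for step: the same one-step gradient update, the same reduction of the test error to $\sigma_i^2+\mathbb{E}\|w^1-w^\star\|_2^2$, the same noise term $\eta^2 s\sigma_i^2/K$, and the same split $I-\eta\hat\Sigma=(1-\eta)I-\eta(\hat\Sigma-I)$.

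The only real difference is Step 3. The paper, writing $G=I+\Delta$, does not compute $\mathbb{E}\|\Delta\delta\|_2^2$. It bounds it by $\mathbb{E}\|\Delta\|_F^2\,\|\delta\|_2^2$ and uses $\mathbb{E}\|\Delta\|_F^2=\frac1K\bigl(\mathbb{E}\|\boldsymbol{x}_{\mathcal S}\|_2^4-s\bigr)=\frac{s(s+1)}{K}$, which is exactly where the stated constant comes from.

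Your route evaluates the term exactly via the full Gaussian fourth-moment tensor identity and obtains $\frac{s+1}{K}\|\delta\|_2^2$. This is sharper by a factor of $s$. Carried into Theorem~\ref{thm:total-error-multi}, it would replace the cubic $\frac{d(d+1)(d+2)}{3}$ by $\frac{d(d+3)}{2}$, leaving the order-dependent term untouched. The price is that you need the whole moment tensor, not a single scalar moment.

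The paper's route needs only $\mathbb{E}\|\boldsymbol{x}_{\mathcal S}\|_2^4=s(s+2)$ (or $\le$), and that settles your worry about the fallback. Your Cauchy--Schwarz bound loses the constant because you apply it to the uncentered $\boldsymbol{x}\boldsymbol{x}^\top\delta$ before subtracting $\|\delta\|_2^2$. If you instead apply $\|(\boldsymbol{x}\boldsymbol{x}^\top-I)\delta\|_2\le\|\boldsymbol{x}\boldsymbol{x}^\top-I\|_F\,\|\delta\|_2$ to the centered matrix, you get $\mathbb{E}\|\boldsymbol{x}\boldsymbol{x}^\top-I\|_F^2=\mathbb{E}\|\boldsymbol{x}_{\mathcal S}\|_2^4-s\le s(s+1)$. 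That matches the lemma exactly under the weaker hypothesis.
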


\subsection{Optimal Estimation Order Analysis}
\label{subsec:optimal-order}

We next examine how the estimation order affects the cumulative error on each client. Let $o$ be a permutation of $\{1,\ldots,d\}$ specifying the output order. At step $t$, coordinate $o(t)$ is estimated using the cumulative feature set
$\mathcal{S}(t) \triangleq \{o(1),\ldots,o(t)\},
|\mathcal{S}(t)| = t.$
Applying Lemma~\ref{lemma:onestep-error-multi} at each step and summing over $t \in [d]$ gives the following bound.

\begin{theorem}[Total estimation error bound on a single client $m$]
\label{thm:total-error-multi}
Under Assumptions~\ref{ass:linear-multi}--\ref{ass:delta-const-multi},
for any client $m \in [M]$ and any estimation order $o$,
the total expected prediction error accumulated over all $d$ sequential estimation steps satisfies
\begin{align}
&\sum\nolimits_{t=1}^d 
\mathbb{E}\Bigl[
\bigl(
y^{m}_{o(t)} - \boldsymbol{x}_{\mathcal{S}(t)}^{\top} w^{1}_{\mathcal{S}(t)}
\bigr)^2
\Bigr]
\nonumber\\
&\le
\sum\nolimits_{i=1}^d \sigma_i^2
+ (1-\eta)^2 \|\delta_m\|_2^2\, d
\nonumber\\
&\quad
+ \eta^2 \frac{\|\delta_m\|_2^2}{K}\, \frac{d(d+1)(d+2)}{3}
+ \frac{\eta^2}{K} \sum\nolimits_{t=1}^d t\,\sigma_{o(t)}^2 .
\label{eq:thm-total-multi}
\end{align}
\end{theorem}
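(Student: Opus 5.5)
The plan is to apply Lemma~\ref{lemma:onestep-error-multi} once per step $t$ and then sum the $d$ resulting inequalities term by term. At step $t$ we instantiate the lemma with output coordinate $i=o(t)$ and subset $\mathcal{S}=\mathcal{S}(t)$, so that $|\mathcal{S}|=t$. We read $w^{1}_{\mathcal{S}(t)}$ in the statement as shorthand for $w^{1}_{o(t),\mathcal{S}(t)}$. This gives, for each $t\in[d]$,
\begin{align*}
\mathbb{E}\Bigl[\bigl(y^{m}_{o(t)}-\boldsymbol{x}_{\mathcal{S}(t)}^{\top}w^{1}_{o(t),\mathcal{S}(t)}\bigr)^2\Bigr]
&\le \Bigl(1+\eta^2\tfrac{t}{K}\Bigr)\sigma_{o(t)}^2+(1-\eta)^2\|\delta_{\mathcal{S}(t),m}\|_2^2\\
&\quad+\eta^2\tfrac{t(t+1)}{K}\|\delta_{\mathcal{S}(t),m}\|_2^2 .
\end{align*}
Only Assumptions~\ref{ass:linear-multi} and~\ref{ass:isotropic-multi} are needed at this stage, and they hold uniformly in $i$ and $\mathcal{S}$.

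Next we use Assumption~\ref{ass:delta-const-multi} to replace every $\|\delta_{\mathcal{S}(t),m}\|_2^2$ by the order-independent constant $\|\delta_m\|_2^2$. This step is essential. Without it, the initialization terms would depend on the nested sets $\mathcal{S}(1)\subset\cdots\subset\mathcal{S}(d)$ and could not be pulled out of the sum. With the constant in hand, we sum over $t=1,\dots,d$ and treat the four groups of terms separately:
\begin{itemize}
\item $\sum_t\sigma_{o(t)}^2=\sum_{i=1}^d\sigma_i^2$, because $o$ is a permutation of $[d]$.
\item $\sum_t(1-\eta)^2\|\delta_m\|_2^2=(1-\eta)^2\|\delta_m\|_2^2\,d$.
\item $\frac{\eta^2\|\delta_m\|_2^2}{K}\sum_{t=1}^d t(t+1)$, where the closed form $\sum_{t=1}^d t(t+1)=\frac{d(d+1)}{2}+\frac{d(d+1)(2d+1)}{6}=\frac{d(d+1)(d+2)}{3}$ is checked by combining the two standard sums.
\item The order-dependent residual $\frac{\eta^2}{K}\sum_t t\,\sigma_{o(t)}^2$, which is left as is.
\end{itemize}
Collecting the four groups gives exactly \eqref{eq:thm-total-multi}.

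There is no genuine analytic obstacle, since the argument is bookkeeping on top of the lemma. The points needing care are:
\begin{itemize}
\item the lemma's bound holds for every pair $(i,\mathcal{S})$, so it can be applied along an arbitrary order $o$ even though $\mathcal{S}(t)$ depends on $o$;
\item the permutation reindexing is applied only to the unweighted $\sigma$ sum and not to the $t$-weighted one, which must remain order-dependent because that is what the subsequent optimal-order analysis exploits.
\end{itemize}
Assumption~\ref{ass:monotone-sigma-multi} is not needed for the bound itself. It is listed among the hypotheses only so that the residual term can later be minimized by the ripple order, since pairing the largest $\sigma_i^2$ with the smallest weights $t$ minimizes $\sum_t t\,\sigma_{o(t)}^2$ by the rearrangement inequality.
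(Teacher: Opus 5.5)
Your proposal is correct and follows the paper's proof essentially step for step: apply Lemma~\ref{lemma:onestep-error-multi} with $i=o(t)$ and $\mathcal{S}=\mathcal{S}(t)$ (so $|\mathcal{S}|=t$), then use Assumption~\ref{ass:delta-const-multi} to replace $\|\delta_{\mathcal{S}(t),m}\|_2^2$ by $\|\delta_m\|_2^2$, sum over $t$, and apply $\sum_{t=1}^{d}\sigma_{o(t)}^2=\sum_{i=1}^{d}\sigma_i^2$ together with $\sum_{t=1}^{d}t(t+1)=d(d+1)(d+2)/3$. Your remark that Assumption~\ref{ass:monotone-sigma-multi} plays no role in the bound itself is also accurate, since the paper's proof does not use it either.
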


\begin{remark}
\label{rem:order-dependent}
\textnormal{
In~\eqref{eq:thm-total-multi}, the first three terms are independent of $o$, since $o$ is a permutation and $\sum_{t=1}^{d}t(t+1)=\frac{d(d+1)(d+2)}{3}$. Therefore, the estimation order affects the bound only through
$T(o) \triangleq \frac{\eta^{2}}{K}\sum_{t=1}^{d} t\,\sigma_{o(t)}^2.$
Because later coordinates receive larger coefficients $t$, minimizing the bound requires larger-variance coordinates to be estimated earlier.
}
\end{remark}

\begin{proposition}[Optimal estimation order]
\label{prop:optimal-order-multi}
Under Assumption~\ref{ass:monotone-sigma-multi}, for any permutation $o$ of $\{1,\ldots,d\}$,
the order-dependent term
$T(o) \triangleq \sum_{t=1}^{d} t\,\sigma_{o(t)}^2$
satisfies
$\sum_{t=1}^{d} t\,\sigma_{o(t)}^2
\ge
\sum_{t=1}^{d} t\,\sigma_{(d+1-t)}^2,$
with equality if and only if the permutation $o$ orders the sequence
$\{\sigma_i^2\}_{i=1}^{d}$ in nonincreasing order.
\end{proposition}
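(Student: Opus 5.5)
The plan is to recognize this as a rearrangement-type statement and prove it by Abel summation, which gives the inequality and the equality case together. First I fix notation: write $a_t \triangleq \sigma_{o(t)}^2$, and read $\sigma_{(1)}^2 \le \cdots \le \sigma_{(d)}^2$ as the order statistics of $\{\sigma_i^2\}_{i=1}^d$. Then $\sigma_{(d+1-t)}^2$ is the $t$-th largest value, and under Assumption~\ref{ass:monotone-sigma-multi} it coincides with $\sigma_t^2$. So the claim is that $T(o)$ is minimized by the identity order, which is the ripple order.

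\textbf{Step 1 (Abel summation).} Write $t = \sum_{u=1}^{t} 1$ and exchange the order of summation:
\begin{align*}
T(o) = \sum_{t=1}^{d} t\, a_t = \sum_{u=1}^{d} \sum_{t=u}^{d} a_t = \sum_{u=1}^{d} B_u(o),
\end{align*}
where $B_u(o) \triangleq \sum_{t=u}^{d} \sigma_{o(t)}^2$ is the sum of the last $d-u+1$ variances placed by $o$.

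\textbf{Step 2 (tail-sum bound).} For each $u$, the tail $B_u(o)$ is a sum of $d-u+1$ distinct elements of the multiset $\{\sigma_i^2\}$. It is therefore at least the sum of the $d-u+1$ smallest values, namely $\sum_{t=u}^{d}\sigma_{(d+1-t)}^2$. Summing over $u$ and applying the identity of Step 1 to the sorted sequence gives $T(o) \ge \sum_{t=1}^{d} t\,\sigma_{(d+1-t)}^2$.

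\textbf{Step 3 (equality case).} This is the part needing the most care, because of possible ties among the $\sigma_i^2$. Since each summand inequality in Step 2 is individually $\ge$, equality in the total holds if and only if equality holds for every $u$. So the characterization reduces to showing: every tail $B_u(o)$ equals the sum of the $d-u+1$ smallest values if and only if $a_1 \ge a_2 \ge \cdots \ge a_d$.
\begin{itemize}
\item If $(a_t)$ is nonincreasing, each tail consists of the smallest values, counted with multiplicity, so every tail is minimal.
\item Suppose instead $a_s < a_{s+1}$ for some $s$. I claim the tail starting at $u = s+1$ is not minimal. If $B_{s+1}(o)$ were the sum of the $d-s$ smallest values, then the multiset $\{a_{s+1},\ldots,a_d\}$ would equal the $d-s$ smallest values, counted with multiplicity. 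Every excluded value, in particular $a_s$, would then be $\ge \max\{a_{s+1},\ldots,a_d\} \ge a_{s+1}$. This contradicts $a_s < a_{s+1}$.
\end{itemize}

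As a cross-check, or as an alternative route if the multiset argument feels delicate, I would use an exchange argument. Swapping a strict inversion $t<s$ with $a_t<a_s$ changes $T$ by $-(s-t)(a_s-a_t)<0$. Hence any non-monotone order is strictly worse than some permutation, and so strictly above the minimum. Meanwhile all nonincreasing orders yield the same sequence $(a_t)$ and hence the same value of $T$. Combined with Remark~\ref{rem:order-dependent}, this shows the bound in Theorem~\ref{thm:total-error-multi} is minimized by the ripple order $o(t)=t$.
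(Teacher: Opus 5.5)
Your proof is correct. The paper itself never proves this proposition. Its appendix covers Lemma~\ref{lemma:onestep-error-multi}, Theorems~\ref{thm:total-error-multi}--\ref{thm:total-error-agg} and Proposition~\ref{prop:alpha-optimal-after-thm3-v3}, but not this one. For Proposition~\ref{prop:optimal-order-multi} the only justification is the one-line observation in Remark~\ref{rem:order-dependent}: later positions carry larger coefficients $t$, so larger variances should be placed first. That is an implicit appeal to the rearrangement inequality.

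Your Abel-summation route writes $T(o)=\sum_{u}B_u(o)$ and bounds each tail by the sum of the $d-u+1$ smallest variances. This is a self-contained proof of that inequality for the strictly increasing weights $1,\ldots,d$. What it adds over the heuristic is a correct equality case when there are ties. Equality forces every tail to be minimal, which is equivalent to $(\sigma_{o(t)}^2)_t$ being nonincreasing. Hence every permutation that sorts the variances in nonincreasing order, with arbitrary tie-breaking, is optimal, not only $o(t)=t$.

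The exchange argument you give as a cross-check is the classical proof of the rearrangement inequality and is equally valid. It is quicker for the inequality itself, but it needs your extra observation that all nonincreasing arrangements give the same value of $T$.

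Your reading of $\sigma_{(d+1-t)}^2$ as the $t$-th largest value is the right one, and a necessary one. The paper never defines this notation. Under Assumption~\ref{ass:monotone-sigma-multi}, the literal reading $\sigma_{d+1-t}^2$ would make the right-hand side the maximum of $T$ rather than its minimum.

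One small tightening concerns the second bullet of Step 3. There you claim that a minimal tail must coincide, as a multiset, with the $d-s$ smallest values. This is true but stated without justification: sort the tail as $b_1\le\cdots\le b_{d-s}$; then $b_j\ge\sigma_{(j)}^2$ termwise, so equal sums force termwise equality. You can bypass the claim entirely. Exchanging $a_{s+1}$ in the tail for the excluded, strictly smaller $a_s$ gives a $(d-s)$-element subcollection with strictly smaller sum, so $B_{s+1}(o)$ cannot be minimal.
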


\begin{remark}
\textnormal{
Proposition~\ref{prop:optimal-order-multi} shows that the order-dependent error is minimized by estimating coordinates with larger signal variability earlier. This ordering is common to all clients because $\{\sigma_i^2\}$ is shared, while client heterogeneity appears only through $\delta_m$ in the order-independent terms. Under
$\sigma_1^2 \ge \sigma_2^2 \ge \cdots \ge \sigma_d^2$,
the optimal permutation is $o^\star(t)=t$, which corresponds to the proposed outward ripple order.
}
\end{remark}

\begin{corollary}[Closed-form estimation error on a single client $m$]
\label{corollary:ripple-multi}
Under Assumptions~\ref{ass:linear-multi}--\ref{ass:delta-const-multi},
by choosing the optimal estimation order in Proposition~\ref{prop:optimal-order-multi},
the total expected prediction error accumulated over all $d$ sequential estimation steps satisfies
\begin{align}
\sum\nolimits_{t=1}^d &
\mathbb{E} \Bigl[
(
y^{m}_{t} - \boldsymbol{x}_{\mathcal{S}(t)}^{\top} w^{1}_{\mathcal{S}(t)}
)^2
\Bigr] \le
\sum\nolimits_{i=1}^d (1+\frac{\eta^2 i}{K})\sigma_i^2
\nonumber\\
&\quad
+ d\Bigl((1-\eta)^2
+ \frac{\eta^2}{K}\,\frac{(d+1)(d+2)}{3}\Bigr)\|\delta_m\|_2^2 .
\label{eq:corollary-ripple-multi}
\end{align}
\end{corollary}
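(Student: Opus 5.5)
The plan is to specialize Theorem~\ref{thm:total-error-multi} to the order selected by Proposition~\ref{prop:optimal-order-multi} and then regroup the terms; no new probabilistic argument is needed.

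\textbf{Step 1 (choice of order).} By Assumption~\ref{ass:monotone-sigma-multi}, the coordinates are indexed so that $\sigma_1^2 \ge \cdots \ge \sigma_d^2$. The identity permutation $o^\star(t)=t$ therefore lists the variances in nonincreasing order. Proposition~\ref{prop:optimal-order-multi} then says $o^\star$ minimizes $T(o)$, with
\[
T(o^\star)=\sum\nolimits_{t=1}^d t\,\sigma_t^2 .
\]
Under this order, the coordinate estimated at step $t$ is $o^\star(t)=t$, using the feature set $\mathcal{S}(t)=\{1,\ldots,t\}$. The left-hand side of \eqref{eq:thm-total-multi} thus becomes exactly the left-hand side of \eqref{eq:corollary-ripple-multi}.

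\textbf{Step 2 (substitute into the theorem).} I will insert $o=o^\star$ into \eqref{eq:thm-total-multi}, which is valid for every permutation under Assumptions~\ref{ass:linear-multi}--\ref{ass:delta-const-multi}. The right-hand side becomes
\[
\sum_{i=1}^d \sigma_i^2+(1-\eta)^2 d\,\|\delta_m\|_2^2+\frac{\eta^2}{K}\frac{d(d+1)(d+2)}{3}\|\delta_m\|_2^2+\frac{\eta^2}{K}\sum_{i=1}^d i\,\sigma_i^2 .
\]

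\textbf{Step 3 (regroup).} The two variance sums combine into $\sum_i(1+\eta^2 i/K)\sigma_i^2$. Factoring $d\|\delta_m\|_2^2$ out of the two $\delta$ terms gives $d\bigl((1-\eta)^2+\frac{\eta^2}{K}\frac{(d+1)(d+2)}{3}\bigr)\|\delta_m\|_2^2$. This is exactly \eqref{eq:corollary-ripple-multi}.

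The only point needing care is Step 1: confirming that Theorem~\ref{thm:total-error-multi} is stated for arbitrary $o$, so that specializing it is legitimate. The optimality of $o^\star$ is not logically required for the inequality itself; it only justifies why this order is the one worth stating. The rest is bookkeeping.
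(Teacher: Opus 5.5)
Your proposal is correct and follows the paper's route: substitute the ripple order $o^\star(t)=t$ into Theorem~\ref{thm:total-error-multi}, so the order-dependent term becomes $\frac{\eta^2}{K}\sum_{i=1}^d i\,\sigma_i^2$, then regroup the variance and mismatch terms. Your remark that the optimality of $o^\star$ only motivates the choice of order, and is not needed for the inequality itself, is also accurate.
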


\begin{remark}
\textnormal{
Corollary~\ref{corollary:ripple-multi} specializes Theorem~\ref{thm:total-error-multi} to ripple ordering, for which the order-dependent term becomes
$T(o)=\tfrac{\eta^2}{K}\sum_{i=1}^d i\,\sigma_i^2$.
The bound holds for each client, with heterogeneity entering through $\|\delta_m\|_2^2$. Physically, estimating high-variability near-Tx regions first provides contextual information for subsequent prediction of more stable far-field regions.
}
\end{remark}

\subsection{Aggregation Analysis}
\label{subsec:aggregation}

We next analyze the estimation error obtained by aggregating the predictions of $M$ clients.

\begin{assumption}
\label{ass:cross-client-uncorr}
For any step $t \in [d]$ and any two distinct clients $m \neq m'$,
the client prediction errors w.r.t. the ground truth are uncorrelated:
$\mathbb{E}[
(
y^{m}_{o(t)} - \boldsymbol{x}_{\mathcal{S}(t)}^\top w_{\mathcal{S}(t)}
)
(
y^{m'}_{o(t)} - \boldsymbol{x}_{\mathcal{S}(t)}^\top w_{\mathcal{S}(t)}
)
]
= 0 .$
\end{assumption}

Assumption~\ref{ass:cross-client-uncorr} models clients whose prediction errors contain no shared systematic component. This condition is reasonable when clients use disjoint local demonstrations and their remaining prediction fluctuations are independent across clients. 
% It eliminates cross-client terms in the aggregated squared error.

\begin{theorem}[Closed-form aggregated estimation error via uniform aggregation]
\label{thm:agg-total-error}
Under Assumptions~\ref{ass:linear-multi}--\ref{ass:cross-client-uncorr},
let each client adopt the optimal ripple estimation order characterized in
Proposition~\ref{prop:optimal-order-multi}.
Then, under uniform averaging across $M$ clients, the total prediction error satisfies
\begin{align}
& \sum\nolimits_{t=1}^{d}
\mathbb{E}\Big[
\Big(
\frac{1}{M}\sum\nolimits_{m=1}^{M}
y^{m}_{t}
-
\boldsymbol{x}_{\mathcal{S}(t)}^\top w_{\mathcal{S}(t)}
\Big)^2
\Big]
\nonumber\\
&\le 
\frac{1}{M^2}\sum\nolimits_{m=1}^{M}
\Big[
\sum\nolimits_{i=1}^{d}
\Bigl(1+\frac{\eta^2 i}{K}\Bigr)\sigma_i^2
\nonumber \\
& \qquad +
d\Bigl((1-\eta)^2
+
\frac{\eta^2}{K}\,\frac{(d+1)(d+2)}{3}\Bigr)
\|\delta_m\|_2^2
\Big].
\label{eq:agg-total-one-eq}
\end{align}
\end{theorem}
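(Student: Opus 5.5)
The plan is to write the aggregated error at each step as the average of the per-client errors, expand the square, remove the cross terms with Assumption~\ref{ass:cross-client-uncorr}, and then apply Corollary~\ref{corollary:ripple-multi} to each client. Since every client adopts the same ripple order $o^\star(t)=t$ (Proposition~\ref{prop:optimal-order-multi}, with $\{\sigma_i^2\}$ shared), all clients use the same cumulative feature set $\mathcal{S}(t)$ at step $t$. The aggregated residual can therefore be written as
\begin{align*}
\frac{1}{M}\sum\nolimits_{m=1}^{M} y^{m}_{t} - \boldsymbol{x}_{\mathcal{S}(t)}^\top w_{\mathcal{S}(t)}
= \frac{1}{M}\sum\nolimits_{m=1}^{M} e^{m}_{t},
\end{align*}
where $e^{m}_{t} \triangleq y^{m}_{t} - \boldsymbol{x}_{\mathcal{S}(t)}^\top w_{\mathcal{S}(t)}$ is the client-$m$ error at step $t$. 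This identity uses only $\frac{1}{M}\sum_m 1 = 1$, so the common predictor term passes through the average.

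Next, squaring and taking expectations gives
\begin{align*}
\mathbb{E}\Big[\Big(\tfrac{1}{M}\sum\nolimits_{m} e^{m}_{t}\Big)^2\Big]
= \tfrac{1}{M^2}\sum\nolimits_{m}\mathbb{E}[(e^{m}_{t})^2] + \tfrac{1}{M^2}\sum\nolimits_{m\neq m'}\mathbb{E}[e^{m}_{t}e^{m'}_{t}].
\end{align*}
Assumption~\ref{ass:cross-client-uncorr} sets every cross term to zero. I then sum over $t\in[d]$ and exchange the finite sums, which yields
\begin{align*}
\tfrac{1}{M^2}\sum\nolimits_{m=1}^{M}\sum\nolimits_{t=1}^{d}\mathbb{E}[(e^{m}_{t})^2].
\end{align*}

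The inner sum is exactly the left-hand side of Corollary~\ref{corollary:ripple-multi} for client $m$, which holds under Assumptions~\ref{ass:linear-multi}--\ref{ass:delta-const-multi} with the ripple order. Substituting the corollary's bound for each client gives \eqref{eq:agg-total-one-eq} directly.

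The main obstacle is a notational mismatch. The aggregated predictor in Assumption~\ref{ass:cross-client-uncorr} and in the theorem is written $w_{\mathcal{S}(t)}$, while the corollary bounds the error of the one-step iterate $w^{1}_{\mathcal{S}(t)}$, which may depend on the client's own demonstrations. I will identify $w_{\mathcal{S}(t)}$ with the one-step iterate used in the corollary. If each client has its own iterate $w^{1,m}_{\mathcal{S}(t)}$, the same argument still works: average the client predictions $\boldsymbol{x}^\top w^{1,m}$ and read Assumption~\ref{ass:cross-client-uncorr} as decorrelating these client-specific errors. I will state this identification explicitly at the start of the proof so that the corollary applies verbatim to each client.
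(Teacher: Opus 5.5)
Your proposal is correct and matches the paper's proof step for step. The paper also writes the aggregated residual as the average of the per-client residuals $y_t^m - \boldsymbol{x}_{\mathcal{S}(t)}^\top w_{\mathcal{S}(t)}$, removes the cross terms with Assumption~\ref{ass:cross-client-uncorr}, sums over $t$, exchanges the sums, and applies Corollary~\ref{corollary:ripple-multi} to each client. The paper makes the same identification of $w_{\mathcal{S}(t)}$ with the corollary's one-step iterate without comment, so stating it explicitly adds rigor without changing the argument.
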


\begin{remark}
\textnormal{
Theorem~\ref{thm:agg-total-error} shows that uniform aggregation preserves the ripple-order error structure while reducing its magnitude through $\frac{1}{M^2}\sum_{m=1}^{M}(\cdot)$. When clients have comparable prediction errors, this yields an effective $1/M$ gain from averaging uncorrelated errors. 
}
\end{remark}

\begin{theorem}[Closed-form aggregated estimation error via weighted aggregation]
\label{thm:total-error-agg}
Under Assumptions~\ref{ass:linear-multi}--\ref{ass:cross-client-uncorr},
let each client adopt the optimal ripple estimation order characterized in
Proposition~\ref{prop:optimal-order-multi}.
For any nonnegative aggregation weights
$\{\alpha_m\}_{m=1}^{M}$ satisfying $\sum_{m=1}^{M}\alpha_m = 1$,
define the aggregated prediction 
% at coordinate $t$ 
as
$y^{\mathcal{A}}_{t}
\triangleq
\sum_{m=1}^{M}\alpha_m\, y^{m}_{t}.$
Then the total prediction error after weighted aggregation satisfies
\begin{align}
& \sum\nolimits_{t=1}^{d}
\mathbb{E}\Big[
\big(
y^{\mathcal{A}}_{t}
-
\boldsymbol{x}_{\mathcal{S}(t)}^\top w_{\mathcal{S}(t)}
\big)^2
\Big]
\nonumber \\
& \le
\sum\nolimits_{m=1}^{M}\alpha_m^2
\Big[
\sum\nolimits_{i=1}^{d}
\Bigl(1+\frac{\eta^2 i}{K}\Bigr)\sigma_i^2
\nonumber \\
& \qquad +
d\Bigl((1-\eta)^2
+
\frac{\eta^2}{K}\,\frac{(d+1)(d+2)}{3}\Bigr)
\|\delta_m\|_2^2
\Big].
\label{eq:agg-total-wt-closed}
\end{align}
\end{theorem}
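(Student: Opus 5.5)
The plan is to reduce the weighted case to the per-client bound of Corollary~\ref{corollary:ripple-multi}, exactly as in the uniform case (Theorem~\ref{thm:agg-total-error} is the special case $\alpha_m = 1/M$).

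\textbf{Step 1: write the aggregated error as a weighted sum of client errors.} Fix a step $t$ and abbreviate the common target by $g_t \triangleq \boldsymbol{x}_{\mathcal{S}(t)}^\top w_{\mathcal{S}(t)}$. Define the client error $e^m_t \triangleq y^m_t - g_t$. Because $\sum_{m}\alpha_m = 1$, we have
\[
y^{\mathcal{A}}_t - g_t = \sum\nolimits_{m=1}^{M}\alpha_m\,(y^m_t - g_t) = \sum\nolimits_{m=1}^{M}\alpha_m e^m_t .
\]
This is the only place normalization of the weights is used. Nonnegativity is not needed for the identity itself; it only keeps the interpretation as a convex combination.

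\textbf{Step 2: expand the square and remove the cross terms.} Squaring and taking expectations gives
\[
\mathbb{E}\big[(y^{\mathcal{A}}_t - g_t)^2\big] = \sum\nolimits_{m}\alpha_m^2\,\mathbb{E}\big[(e^m_t)^2\big] + \sum\nolimits_{m\neq m'}\alpha_m\alpha_{m'}\,\mathbb{E}\big[e^m_t e^{m'}_t\big].
\]
Assumption~\ref{ass:cross-client-uncorr} states precisely that $\mathbb{E}[e^m_t e^{m'}_t]=0$ for $m\neq m'$ at every step $t$. Hence every cross term vanishes and the aggregated error at step $t$ equals $\sum_m \alpha_m^2\,\mathbb{E}[(e^m_t)^2]$.

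\textbf{Step 3: sum over steps and apply the single-client bound.} Summing over $t\in[d]$ and exchanging the two finite sums yields $\sum_m \alpha_m^2 \sum_t \mathbb{E}[(e^m_t)^2]$. Each client uses the ripple order $o^\star(t)=t$, so the inner sum is exactly the left-hand side of Corollary~\ref{corollary:ripple-multi} for client $m$. Since $\alpha_m^2 \ge 0$, we may replace each inner sum by that corollary's right-hand side, which gives \eqref{eq:agg-total-wt-closed}.

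\textbf{Main obstacle: consistency of the target.} The delicate point is that the target in Assumption~\ref{ass:cross-client-uncorr} and in the theorem is $\boldsymbol{x}_{\mathcal{S}(t)}^\top w_{\mathcal{S}(t)}$, whereas Corollary~\ref{corollary:ripple-multi} bounds the error against the client's one-step iterate $w^1_{\mathcal{S}(t)}$. I would handle this by interpreting $w_{\mathcal{S}(t)}$ as the per-step estimate to which the corollary applies, under the paper's notational convention, so that $\mathbb{E}[(e^m_t)^2]$ is exactly the summand bounded there. With that identification made explicit, the remaining steps are routine algebra.
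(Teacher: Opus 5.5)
Your proposal is correct and follows the paper's proof essentially step for step. Both write the aggregated residual as $\sum_{m}\alpha_m r_t^m$ using $\sum_m \alpha_m = 1$, drop the cross terms via Assumption~\ref{ass:cross-client-uncorr}, exchange the sums over $t$ and $m$, and bound each client's inner sum by Corollary~\ref{corollary:ripple-multi}. The target mismatch you flag ($w_{\mathcal{S}(t)}$ versus the client-specific iterate $w^{1}_{\mathcal{S}(t)}$) is handled in the paper by the same implicit identification, so your explicit remark is, if anything, more careful than the original.
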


\begin{remark}
\textnormal{
Theorem~\ref{thm:total-error-agg} extends uniform averaging to weighted aggregation. The bound weights each client's ripple-order error by $\alpha_m^2$, showing that aggregation performance depends on both client quality and the weight distribution. The shared term $\sum_{i=1}^{d}(1+\eta^2 i/K)\sigma_i^2$ captures coordinate-specific signal variability, while the term involving $\|\delta_m\|_2^2$ captures client-specific mismatch. Therefore, assigning larger weights to clients with smaller mismatch can reduce the aggregated error bound.
}
\end{remark}

Theorem~\ref{thm:total-error-agg} motivates nonuniform \textbf{importance-aware aggregation} in \textsc{Poly-Ripple}. Ideally, clients whose local demonstrations better match the server-issued query should receive larger weights, as they are expected to have smaller mismatch-induced errors. We next translate the bound into an importance-aware design principle.
Specifically, the right-hand side of \eqref{eq:agg-total-wt-closed} is quadratic in the aggregation weights. We define the client-dependent estimation error proxy
\begin{align}
\scalebox{0.88}{$
r_m^2
\triangleq
\sum\nolimits_{i=1}^{d}
(1+\frac{\eta^2 i}{K})\sigma_i^2
+
d((1-\eta)^2
+
\frac{\eta^2}{K}\,\frac{(d+1)(d+2)}{3})\|\delta_m\|_2^2,
% \qquad m\in[M],
\label{eq:rm-def-after-thm3-v2}
$}
\end{align}
and consider the following optimization problem:
\begin{align}
\mathbf{P1}\qquad
\min_{\{\alpha_m\}_{m=1}^{M}}\ \ &
\sum\nolimits_{m=1}^{M}\alpha_m^2 r_m^2
\label{eq:P1-alpha-opt-v2}
\\
\textnormal{s.t.}\ \ &
\alpha_m\ge 0,\quad \forall\, m\in[M],
\nonumber \\
& \sum\nolimits_{m=1}^{M}\alpha_m=1.
\nonumber
\end{align}

Problem \textbf{P1} is important for two reasons. First, it provides
an explicit and principled objective that is implied by
our theoretical bound, and therefore clarifies what a better
aggregation rule should optimize. Second, its closed-form
solution reveals an inverse-risk allocation rule, which directly
inspires our importance-aware weighting strategy in \textsc{Poly-Ripple}. 

\begin{proposition}
\label{prop:alpha-optimal-after-thm3-v3}
Assume $r_m^2>0$ for all $m\in[M]$.  
The solution to the optimization problem in
\eqref{eq:P1-alpha-opt-v2} follows an inverse-risk allocation rule and can be
expressed as
\begin{align}
% \scalebox{1}{$
\alpha_m^\star
\ \propto\
\frac{1}{r_m^2}\cdot \kappa,
\quad \forall\, m\in[M],
% $}
\label{eq:alpha-propto-with-kappa-v3}
\end{align}
where the (client-independent) 
% normalization
factor $\kappa$ is defined by
$\kappa
\triangleq
(\sum\nolimits_{m'=1}^{M}\frac{1}{r_{m'}^2})^{-1},$
so that $\sum_{m=1}^{M}\alpha_m^\star=1$.
\end{proposition}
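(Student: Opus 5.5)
The plan is to treat \textbf{P1} as a strictly convex quadratic program over the probability simplex, write down its KKT conditions, and show that the stationary point already lies in the interior of the simplex. The objective $\sum_m \alpha_m^2 r_m^2$ is separable, with Hessian $\mathrm{diag}(2r_1^2,\ldots,2r_M^2)$. This Hessian is positive definite because we assume $r_m^2>0$. The feasible set is a nonempty compact convex polytope, so a minimizer exists and is unique. Since the constraints are affine, Slater's condition holds, and the KKT conditions are both necessary and sufficient.

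\emph{Step 1 (KKT system).} Form the Lagrangian
\[
L(\alpha,\lambda,\mu)=\sum_m \alpha_m^2 r_m^2-\lambda\Bigl(\sum_m\alpha_m-1\Bigr)-\sum_m\mu_m\alpha_m,
\qquad \mu_m\ge 0 .
\]
Stationarity gives $2\alpha_m r_m^2=\lambda+\mu_m$, together with complementary slackness $\mu_m\alpha_m=0$.

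\emph{Step 2 (all weights positive).} We next rule out boundary solutions. Suppose some $\alpha_m=0$. Since the $\alpha$'s sum to one, at least one index has $\alpha_{m'}>0$, and for that index $\mu_{m'}=0$, so $\lambda=2\alpha_{m'}r_{m'}^2>0$. Stationarity at $m$ would then read $0=\lambda+\mu_m>0$, which is a contradiction. Therefore every $\alpha_m>0$, every $\mu_m=0$, and $\alpha_m=\lambda/(2r_m^2)$.

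\emph{Step 3 (normalization).} Imposing $\sum_m\alpha_m=1$ gives $\lambda/2=\bigl(\sum_{m'}1/r_{m'}^2\bigr)^{-1}=\kappa$. Hence $\alpha_m^\star=\kappa/r_m^2$, which is exactly the inverse-risk rule $\alpha_m^\star\propto \kappa/r_m^2$ with the stated client-independent factor $\kappa$. As a sanity check, substituting back gives the optimal value $\kappa$.

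\emph{Alternative route.} One can instead verify optimality directly with Cauchy--Schwarz:
\[
1=\Bigl(\sum_m \alpha_m\Bigr)^2=\Bigl(\sum_m (\alpha_m r_m)(1/r_m)\Bigr)^2\le \Bigl(\sum_m\alpha_m^2r_m^2\Bigr)\Bigl(\sum_m 1/r_m^2\Bigr).
\]
This gives the lower bound $\sum_m\alpha_m^2r_m^2\ge\kappa$ for every feasible $\alpha$. Equality holds iff $\alpha_m r_m\propto 1/r_m$, i.e.\ $\alpha_m\propto 1/r_m^2$. This argument also yields uniqueness.

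\emph{Main obstacle.} The only delicate step is handling the nonnegativity constraints, namely showing that no $\alpha_m$ sits at the boundary. Step 2 settles this using $r_m^2>0$. With that in hand, the rest is routine algebra.
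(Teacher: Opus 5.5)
Your proof is correct, but your main route differs from the paper's. The paper's proof is essentially your ``alternative route'': it applies Cauchy--Schwarz to get $\sum_m\alpha_m^2r_m^2\ge\kappa$ for every $\alpha$ with $\sum_m\alpha_m=1$. It then reads off the equality case $\alpha_m\propto 1/r_m^2$, normalizes to $\alpha_m^\star=\kappa/r_m^2$, and only at the end observes that $\alpha_m^\star>0$, so the nonnegativity constraints hold.

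The comparison is instructive. The Cauchy--Schwarz bound never uses $\alpha_m\ge 0$, so it actually solves the relaxed problem over the hyperplane $\sum_m\alpha_m=1$, and the nonnegativity constraints turn out to be inactive for free. The ``main obstacle'' you identify in Step 2 is therefore an artifact of the KKT formulation and does not arise on that route. Uniqueness and the optimal value $\kappa$ also fall out of the equality case, with no appeal to existence, strict convexity, or a constraint qualification.

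Your KKT route is heavier: you need existence via compactness, uniqueness via the positive-definite Hessian, necessity of the KKT conditions, and the boundary-exclusion argument. It is, however, the more robust template. It gives the multiplier interpretation $\lambda/2=\kappa$, which equals the optimal value. It would also still work if the objective or constraints were modified so that the hyperplane minimizer left the simplex. In that case it would produce a water-filling-type solution with some $\alpha_m=0$, whereas the pure Cauchy--Schwarz equality argument would no longer be tight.
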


\begin{remark}
\textnormal{
Proposition~\ref{prop:alpha-optimal-after-thm3-v3} provides a weight design principle implied by Theorem~\ref{thm:total-error-agg}.
The optimal aggregation weight scales inversely with the client-specific proxy
$r_m^2$, so clients with smaller $r_m^2$ receive larger weights.
Under the same estimation order and shared values of $K$, $\eta$, and
$\{\sigma_i^2\}_{i=1}^{d}$, $r_m^2$ differs across clients only through the
mismatch term $\|\delta_m\|_2^2$ in its definition, as shown in
\eqref{eq:rm-def-after-thm3-v2} and the bracketed term in
\eqref{eq:agg-total-wt-closed}.
Specifically, the remaining term
$\sum_{i=1}^{d}(1+\eta^2 i/K)\sigma_i^2$ is shared across clients and therefore
does not affect the relative ordering among $\{r_m^2\}$.
Consequently, the inverse-risk rule indicates that clients with smaller mismatch
magnitudes $\|\delta_m\|_2^2$, and hence smaller $r_m^2$, should be assigned
larger weights.
In practice, the server cannot directly observe $\|\delta_m\|_2^2$ because the
client prompts and intermediate token sequences remain private.
Nevertheless, the proposition specifies the desired monotonic structure for
weight assignment: $\alpha_m$ should decrease with a client-specific risk
indicator.
This motivates our method to adopt a computable relevance score, such as the
cosine similarity between a representative client scene and the query scene,
as a practical proxy for client risk.
A client with higher scene relevance is expected to induce a smaller mismatch
with the server-issued query and is therefore assigned a larger aggregation
weight, whereas less relevant clients are downweighted.
}
\end{remark}

\section{Experiments}
\label{sec:Experiments}

We conduct extensive experiments to answer the following three questions.
The corresponding results and analyses are presented in
Sections~\ref{Results and Analysis}, \ref{Ablation Studies}, and
\ref{subsec:theory-practice}, respectively.

\textbf{Q1:} How does \textsc{Poly-Ripple} compare with ICL baselines in terms of quantitative and qualitative performance?

\textbf{Q2:} How do the key parameters affect its performance?

\textbf{Q3:} Do the empirical results support the theoretical analysis?

\subsection{Experimental Setup}

\textbf{Dataset.}
We adopt a publicly available indoor radio map dataset~\cite{Dataport}.
Each sample is associated with a building layout, a frequency band, an antenna radiation pattern, and a Tx location.
The dataset contains 25 building layouts, three frequency bands, and five antenna radiation patterns.
For each combination of building layout, frequency band, and antenna radiation pattern, radio maps are provided for 50 to 80 Tx locations, resulting in a total of 27,750 samples.
Each input consists of three channels representing reflectance (dB), transmittance (dB), and distance to the Tx (m), while the output is the corresponding radio map.

\textbf{Baselines.}
\textit{(1) Local ICL.}
Each client independently predicts the query issued by the server using its private demonstrations and ripple-ordered autoregressive modeling.
The resulting local predictions are evaluated without server-side aggregation.
\textit{(2) Collaborative ICL with Uniform Aggregation.}
Multiple clients independently predict the same query using their private demonstrations.
The server then uniformly averages their local predictions by assigning each client a weight of $1/M$.

\textbf{Implementation and Metrics.}
We implement our framework using \texttt{PyTorch 2.6} and conduct all experiments on NVIDIA B200 GPUs.
We deploy $20$ clients, denoted by C1 to C20, and assign each client a unique building layout.
The remaining five building layouts, denoted by C21 to C25, are maintained by the server as unseen target layouts representing the user side.
We use a VQ-GAN~\cite{hao2024data} for tokenization and a LLaMA model~\cite{touvron2023llama} for autoregressive prediction.
With a downsampling factor of $f=16$, the VQ-GAN tokenizes each resized $256 \times 256$ image into $256$ discrete tokens using a codebook containing $8{,}192$ entries, each with dimension $64$.
For autoregressive modeling, we adopt the LLaMA-7B checkpoint released with LVM~\cite{bai2024sequential}, which was pretrained on $50$ diverse datasets with $420$ billion tokens and demonstrates strong generalization to unseen tasks.
We use root mean squared error (RMSE) as the evaluation metric, where $\downarrow$ indicates that a lower value is better.
We report mean RMSE and standard deviation over five independent runs.

\subsection{Results and Analysis}
\label{Results and Analysis}

To answer \textbf{Q1}, we evaluate \textsc{Poly-Ripple} with $M=20$ clients.
Each client maintains private demonstrations from one building layout and constructs a local prompt containing $K=3$ paired input-output demonstrations.
Queries are sampled from five unseen layouts, and broadcast to all clients for local ICL.

\vspace{-3mm}
\begin{figure}[H]
\centering
\includegraphics[
    width=0.45\textwidth,
    trim=180 60 175 65,
    clip
]{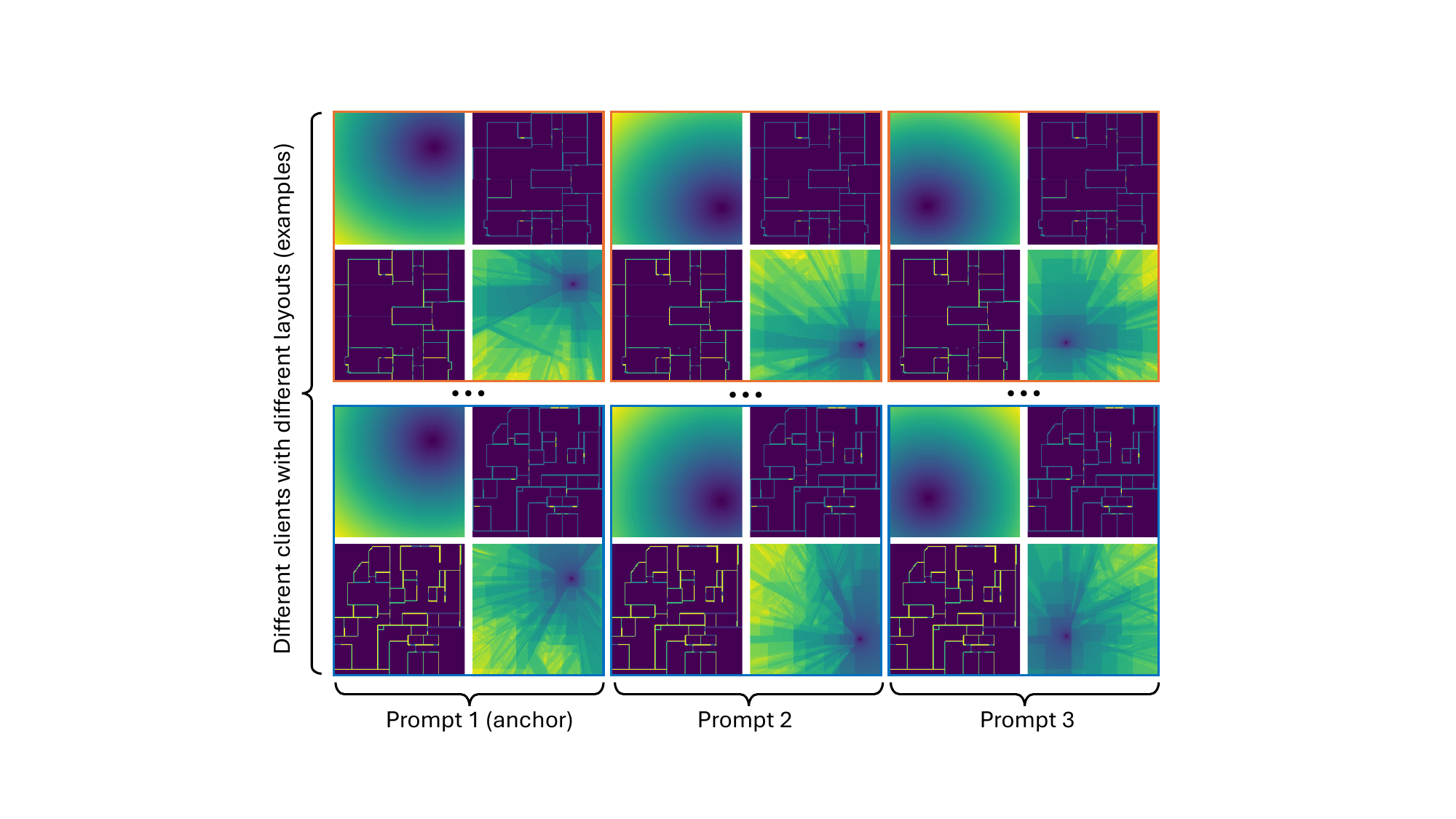}
\vspace{-2mm}
\caption{Local prompt construction with $K=3$ on different clients.}
\vspace{-2mm}
\label{Prompts_selection}
\end{figure}

\textbf{Diversity in local prompt construction.}
For each query, client $m$ constructs its local prompt using the two-step strategy described in Section~\ref{sec:client_side}.
The first demonstration is selected as an anchor whose Tx location is closest to the query Tx location in normalized coordinates.
The remaining $K-1$ demonstrations are selected to provide spatially diverse Tx locations within the same client layout.
As shown in Fig.~\ref{Prompts_selection}, the first column contains the anchor demonstrations, while the remaining columns show the diverse demonstrations selected by different clients.
Fig.~\ref{BTx} further visualizes the selected Tx locations for a representative client.
The anchor provides a propagation context that is geometrically aligned with the query, while the remaining demonstrations introduce complementary propagation patterns from other Tx locations.
This design balances query relevance and contextual diversity under the limited context length of the LAVM.

% \vspace{-3mm}
\begin{figure}
\centering
\includegraphics[
    width=0.49\textwidth,
    trim=260 300 265 152,
    clip
]{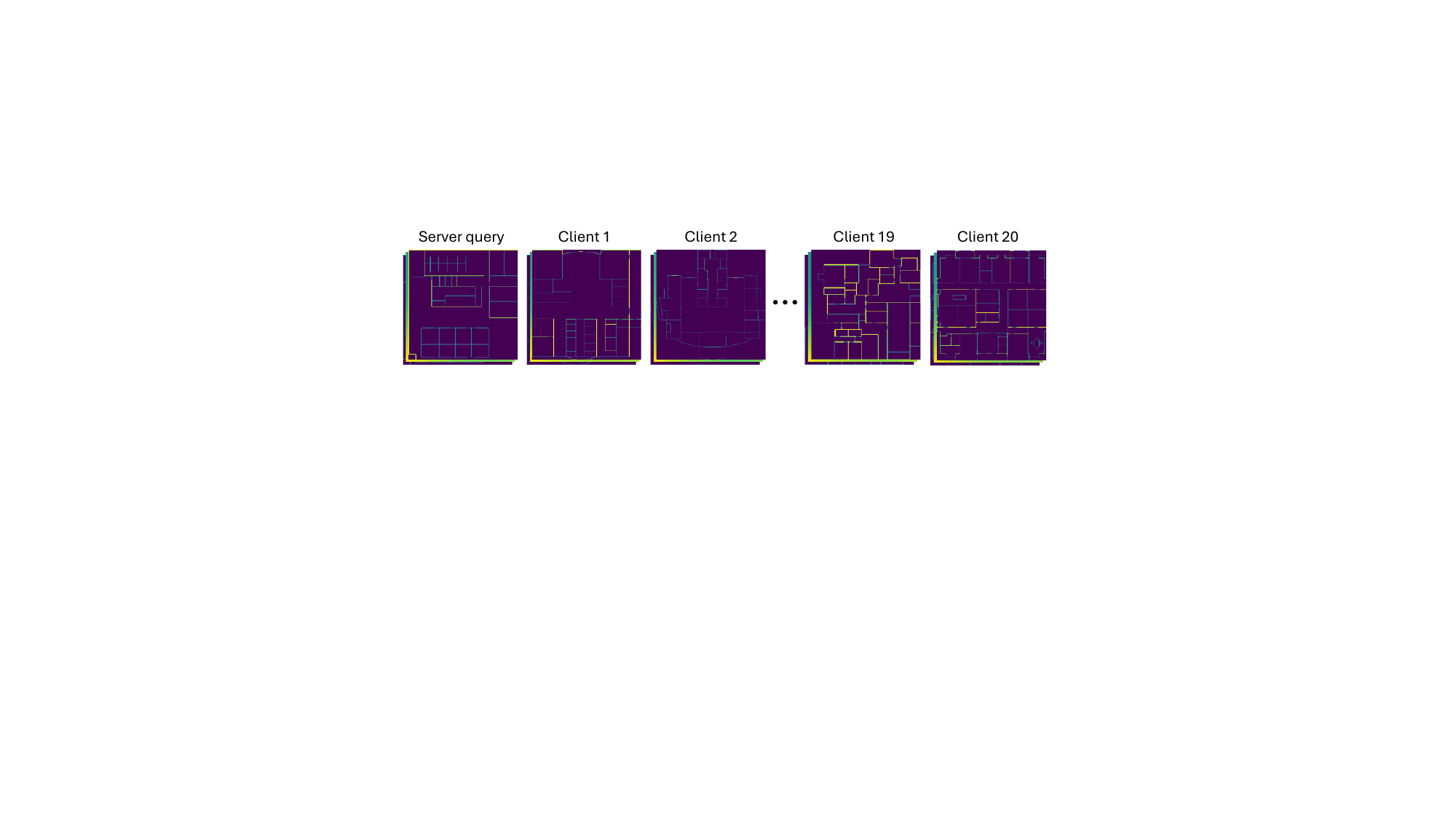}
\vspace{-5mm}
\caption{A query layout and client layouts with varying structural relevance.}
\vspace{-3mm}
\label{all_score}
\end{figure}

\textbf{Heterogeneous relevance across clients.}
Although all clients receive the same query, their private demonstrations are collected from different building layouts.
As illustrated in Fig.~\ref{all_score}, some client layouts are structurally more similar to the query layout than others.
Consequently, their demonstrations provide different levels of contextual relevance and lead to heterogeneous local prediction quality.
A client with demonstrations that better match the query environment is more likely to provide useful information about its propagation patterns.
In contrast, uniform aggregation ignores this heterogeneity and assigns equal importance to all local predictions.
This observation is consistent with the client-dependent mismatch characterized in our theoretical analysis.

\vspace{-2mm}
\begin{figure}[H]
\centering
\includegraphics[
    width=0.49\textwidth,
    trim=5 10 1 1,
    clip
]{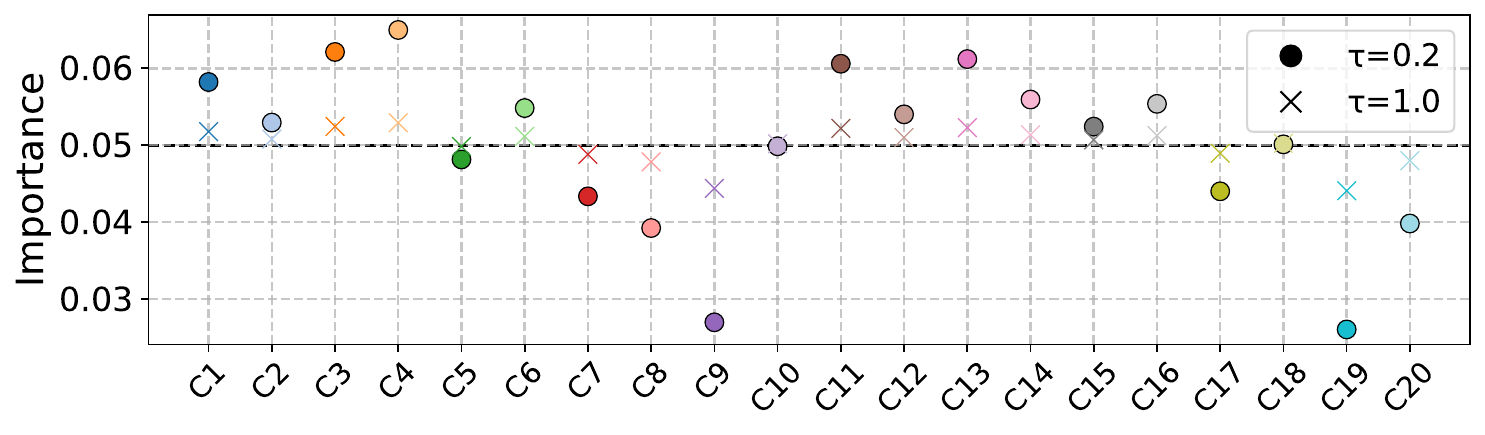}
\vspace{-6mm}
\caption{Aggregation weights under different temperature values $\tau$.}
\vspace{-3mm}
\label{importance}
\end{figure}

\textbf{Importance-aware aggregation of distributed contextual knowledge.}
To account for heterogeneous client relevance, we assign aggregation weights based on the similarity between each client layout and the query.
Fig.~\ref{importance} shows the resulting weight distributions under two temperature values.
A smaller $\tau$ produces a more concentrated distribution and gives greater influence to a few highly relevant clients.
A larger $\tau$ produces a smoother distribution and allows more clients to contribute to the final prediction.
Thus, $\tau$ controls the tradeoff between emphasizing the most relevant clients and preserving complementary information from multiple clients.
Importantly, collaboration enables the server to exploit demonstrations distributed across clients without placing them all into a single context.
This effectively increases the contextual information available for ICL while respecting the LAVM context length constraint and keeping private demonstrations locally stored.

\vspace{-1mm}
\begin{figure}[h]
\centering
\begin{minipage}{0.23\textwidth}
\centering
\includegraphics[
    width=0.95\linewidth,
    height=0.79\linewidth,
    keepaspectratio=false,
    trim=1 1 1 1,
    clip
]{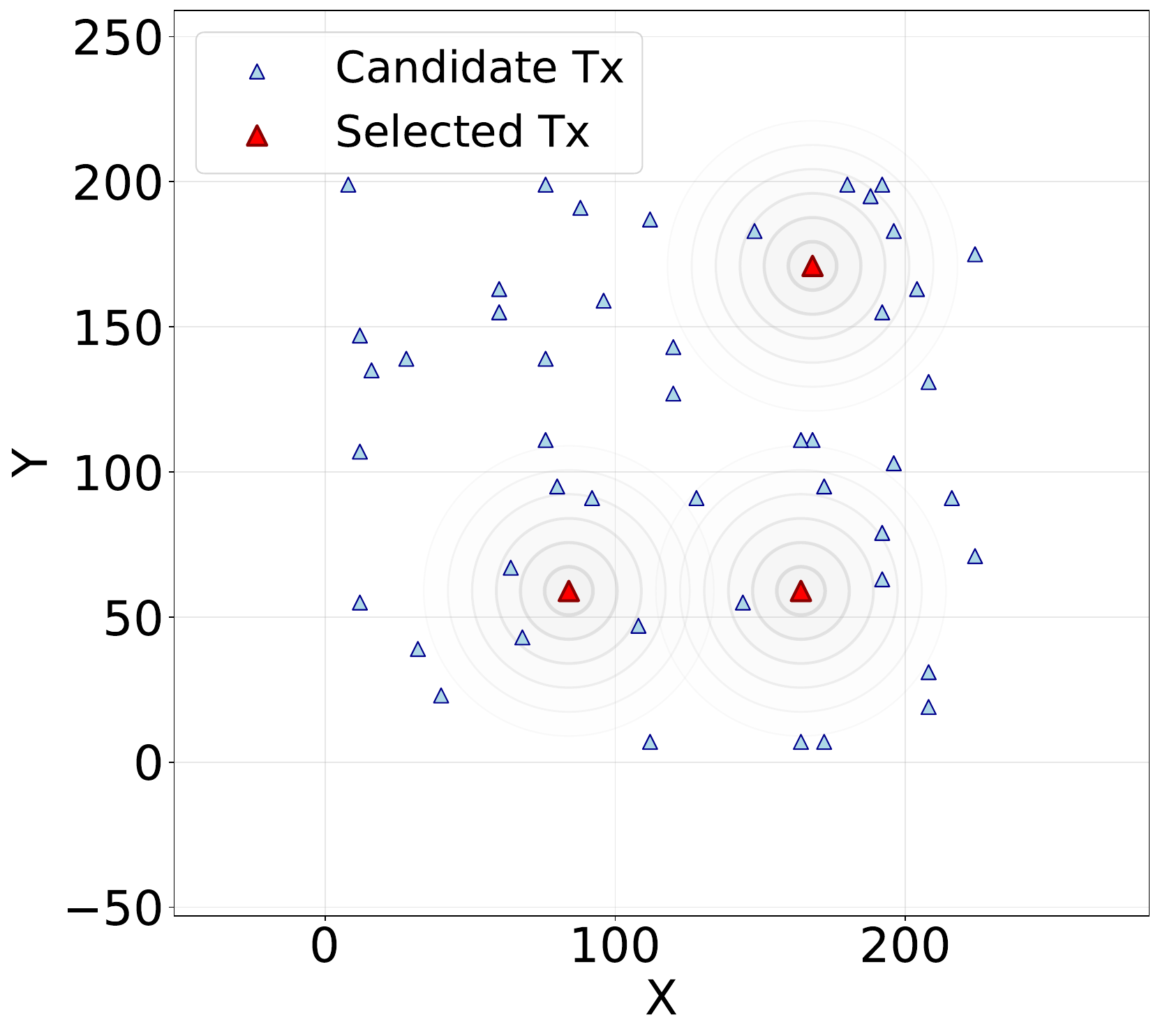}
\vspace{-2mm}
\caption{Selected Tx locations.}
\label{BTx}
\end{minipage}
\hfill
\begin{minipage}{0.23\textwidth}
\centering
\includegraphics[
    width=0.95\linewidth,
    trim=1 1 1 1,
    clip
]{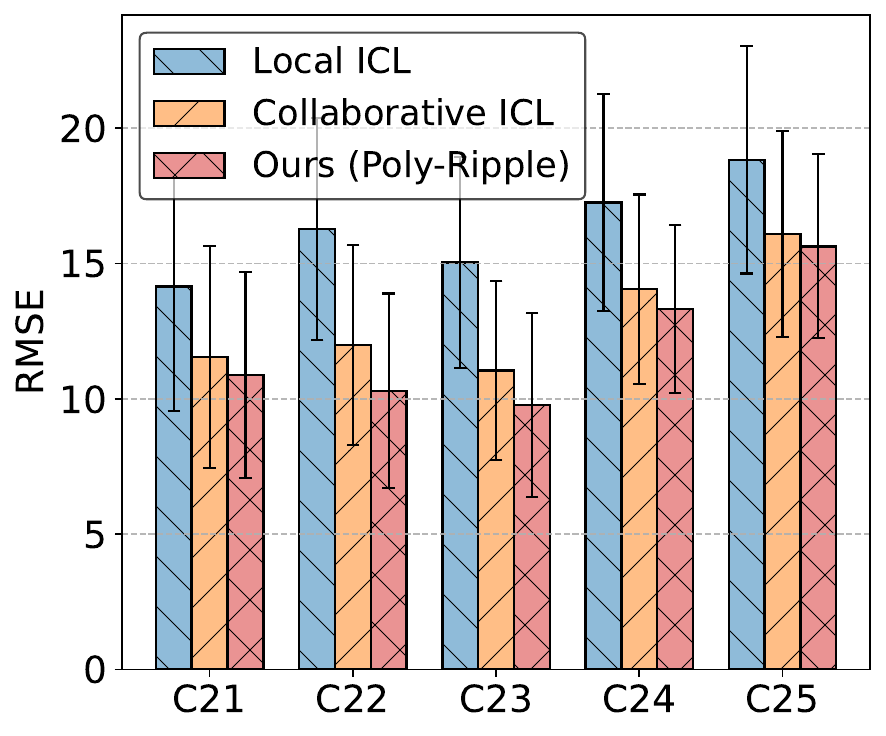}
\vspace{-2mm}
\caption{RMSE comparison.}
\label{RMSE_case2}
\end{minipage}
\end{figure}

\vspace{-2mm}
\textbf{Quantitative performance comparison.}
Fig.~\ref{RMSE_case2} compares local ICL, collaborative ICL with uniform aggregation, and \textsc{Poly-Ripple} on the five unseen layouts, C21 to C25.
\textsc{Poly-Ripple} achieves the lowest RMSE on all five layouts.
Collaborative ICL with uniform aggregation generally outperforms local ICL, showing that combining predictions derived from distributed demonstrations can reduce the error caused by each client's limited context.
This result supports the intuition that additional relevant demonstrations improve ICL performance even when the context length prevents them from being processed together.
It is also consistent with the aggregation gain characterized by our theoretical analysis.
The further improvement of \textsc{Poly-Ripple} over collaborative ICL with uniform aggregation shows that performance depends not only on the number of clients but also on their relevance to the query.
Uniform aggregation gives less relevant clients the same influence as better aligned clients.
In contrast, \textsc{Poly-Ripple} uses importance-aware aggregation to emphasize predictions supported by more relevant demonstrations while retaining complementary information from other clients, thereby achieving consistently lower estimation error.

\textbf{Qualitative performance.}
Fig.~\ref{Two_examples} presents two example queries and compares uniform aggregation with the importance-aware aggregation of \textsc{Poly-Ripple}.
The predictions from \textsc{Poly-Ripple} more closely match the target radio maps and better preserve the dominant attenuation patterns extending outward from the Tx.
This improvement is consistent with ripple-ordered autoregressive modeling, which generates radio map tokens according to their spatial propagation dependencies from the Tx.
Uniform aggregation produces larger deviations in several regions because predictions from less relevant clients receive the same weights as those from better aligned clients.
By assigning greater importance to more relevant contextual information, \textsc{Poly-Ripple} better reconstructs both the global propagation structure and local path loss variations.
\vspace{-3mm}
\begin{figure}[H]
\centering
\includegraphics[
    width=0.47\textwidth,
    trim=105 158 195 130,
    clip
]{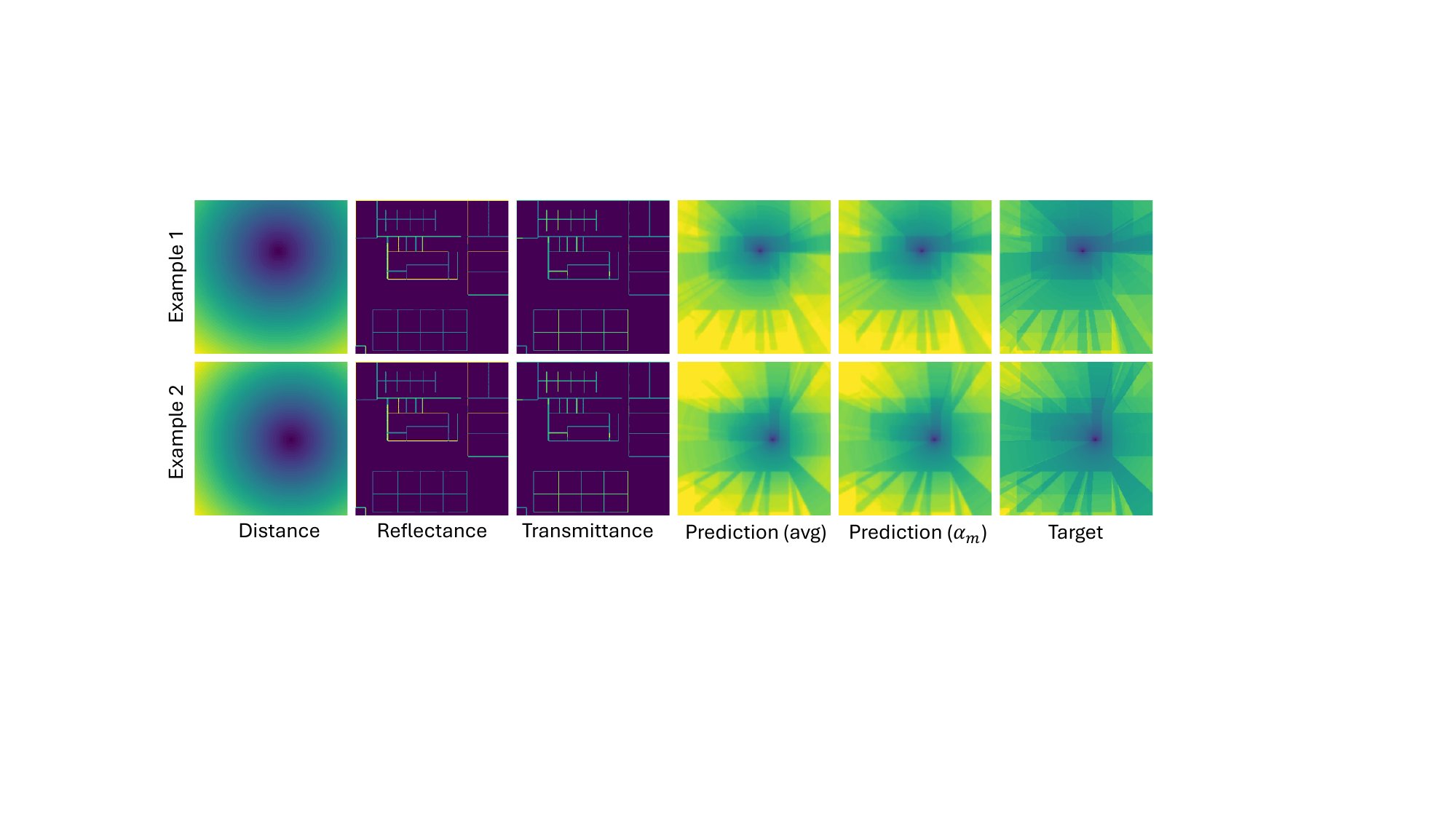}
\vspace{-4mm}
\caption{Qualitative comparison between uniform aggregation (avg) and the importance-aware ($\alpha_m$) aggregation used by \textsc{Poly-Ripple}.}
\vspace{-1mm}
\label{Two_examples}
\end{figure}

\subsection{Impact of Key Parameters}
\label{Ablation Studies}

To answer \textbf{Q2}, we study the effects of the number of participating clients and the aggregation temperature $\tau$.

\textbf{Impact of the number of clients.}
Fig.~\ref{Different_M} reports the average RMSE as the number of clients increases from $5$ to $20$.
The RMSE generally decreases as more clients participate, indicating that additional clients provide complementary contextual information for the same query.
Since the context length constraint of LAVM prevents all demonstrations from being processed within a single context, collaboration allows the server to exploit a larger set of distributed demonstrations through importance-aware aggregation.
The improvement is more pronounced when the number of clients is small, while the marginal gain gradually decreases as more clients are included.
This trend suggests that the most relevant clients provide the largest benefit, whereas additional clients may contribute redundant or less relevant information.
This trend also supports our theoretical result that collaboration improves estimation accuracy by combining complementary client predictions.

\textbf{Impact of the aggregation temperature $\tau$.}
Fig.~\ref{rmse_vs_tau} compares the RMSE under different values of $\tau$.
The results show that $\tau$ affects performance, but the variation remains relatively limited over a moderate range.
This indicates that the main gain of \textsc{Poly-Ripple} comes from importance-aware aggregation rather than careful tuning of $\tau$.
In our weighting mechanism, $\tau$ controls the concentration of the client weights.
A smaller $\tau$ emphasizes the most relevant clients, whereas a larger $\tau$ allows more clients to contribute.
Moderate values therefore provide a practical balance between client relevance and complementary information.
Thus, \textsc{Poly-Ripple} does not require extensive tuning of $\tau$ when applied to RME in a new building layout.

% \vspace{-2mm}
\begin{figure}
\centering
\begin{minipage}{0.23\textwidth}
\centering
\includegraphics[
    width=0.95\linewidth,
    height=0.79\linewidth,
    keepaspectratio=false,
    trim=1 1 1 1,
    clip
]{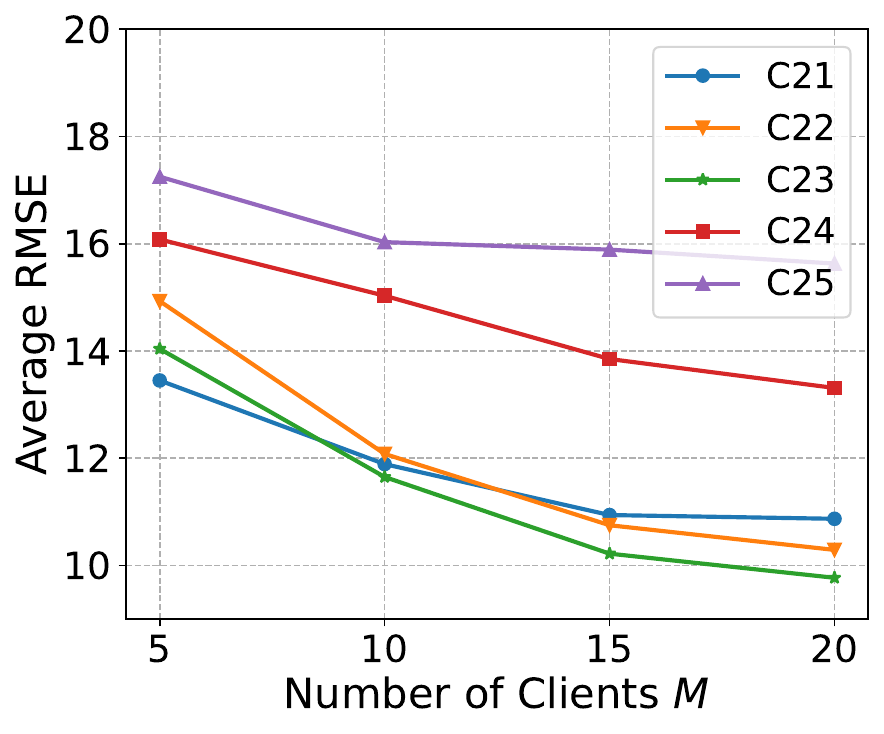}
\vspace{-2mm}
\caption{Impact of number of clients.}
\label{Different_M}
\end{minipage}
\hfill
\begin{minipage}{0.23\textwidth}
\centering
\includegraphics[
    width=0.95\linewidth,
    trim=1 1 1 1,
    clip
]{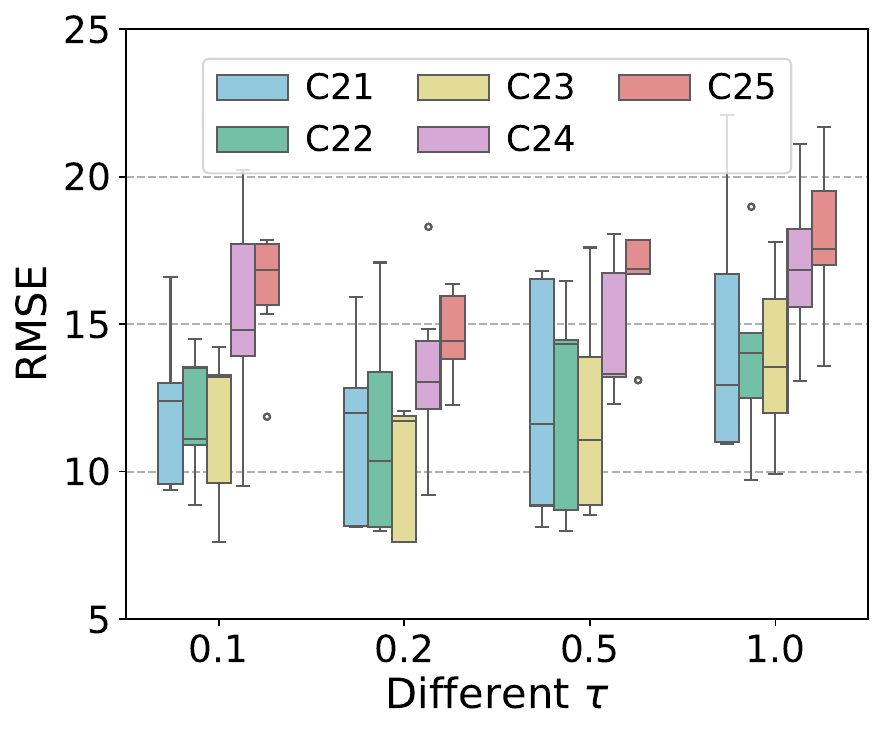}
\vspace{-2mm}
\caption{Impact of temperature $\tau$.}
\label{rmse_vs_tau}
\end{minipage}
\vspace{-3mm}
\end{figure}

\subsection{Discussion of Theory vs. Practice}
\label{subsec:theory-practice}

To answer \textbf{Q3}, we examine whether the empirical results are consistent with the theoretical analysis in Section~\ref{sec:theoretical}.

\textbf{Ripple ordering and causal structure.}
Theorem~\ref{thm:total-error-multi} shows the cumulative estimation error contains the order-dependent term
$\sum_{t=1}^{d} t\sigma_{o(t)}^{2}$.
Since coordinates predicted later receive larger coefficients, coordinates with larger uncertainty should be placed earlier in the autoregressive sequence.
Proposition~\ref{prop:optimal-order-multi} further shows that the optimal order arranges coordinates in nonincreasing order of their signal dynamic ranges.
Under Assumption~\ref{ass:monotone-sigma-multi}, this ordering corresponds to predicting spatial locations outward from the Tx, which matches the ripple order used by \textsc{Poly-Ripple}.
The qualitative results in Fig.~\ref{Two_examples} show that the predicted maps preserve the dominant outward attenuation patterns, supporting the physical motivation and theoretical advantage of ripple ordering.

\textbf{Error reduction through multi-client aggregation.}
Theorem~\ref{thm:agg-total-error} shows that uniform averaging reduces the aggregated estimation error under the cross-client uncorrelated error condition in Assumption~\ref{ass:cross-client-uncorr}.
When clients have comparable error levels, the theoretical bound yields an aggregation gain that scales with $1/M$.
This result is consistent with Fig.~\ref{RMSE_case2}, where collaborative ICL with uniform aggregation generally outperforms local ICL.
Fig.~\ref{Different_M} provides further evidence, as the average RMSE decreases when more clients participate.
The smaller marginal gain at larger $M$ is also consistent with practice because additional clients may provide increasingly redundant or less relevant contextual information.

\textbf{Client-dependent risks and importance-aware weighting.}
Theorem~\ref{thm:total-error-agg} characterizes the weighted aggregation error through
$\sum_{m=1}^{M}\alpha_m^2 r_m^2$,
where $r_m^2$ denotes a client-dependent estimation risk proxy.
Proposition~\ref{prop:alpha-optimal-after-thm3-v3} further shows that clients with lower risks should receive larger aggregation weights.
Because $r_m^2$, particularly the mismatch term $\|\delta_m\|_2^2$, is not directly observable at the server, \textsc{Poly-Ripple} uses layout similarity as a practical proxy for client-query relevance.
Fig.~\ref{all_score} illustrates the heterogeneous relevance of different client layouts, while Fig.~\ref{importance} shows the resulting nonuniform weights.
The lower RMSE of \textsc{Poly-Ripple} compared with uniform aggregation in Fig.~\ref{RMSE_case2} supports the theoretical benefit of assigning larger weights to more relevant clients.
The qualitative results in Fig.~\ref{Two_examples} further show that importance-aware aggregation produces radio maps that more closely match the targets.

\textbf{Theory-guided temperature design.}
Proposition~\ref{prop:alpha-optimal-after-thm3-v3} shows that clients with lower estimation risks should receive larger aggregation weights.
This result motivates our similarity-based weighting design, where layout similarity serves as a practical proxy for the unobservable client risk.
The temperature $\tau$ is introduced to control the concentration of these relevance-based weights.
Fig.~\ref{rmse_vs_tau} shows that the RMSE varies with $\tau$ but remains relatively stable over a moderate range.
This result indicates that $\tau$ primarily adjusts the weight concentration, while the main performance gains arise from multi-client collaboration and nonuniform weighting according to client relevance.

\section{Conclusion}
\label{sec:conclusion}

In conclusion, we formulate indoor RME as a collaborative ICL problem and propose \textsc{Poly-Ripple} for estimating radio maps without task-specific model updates.
Each client locally performs ICL using its private demonstrations, while ripple-ordered autoregressive modeling captures the causal structure of wireless propagation.
The server then applies importance-aware aggregation to combine complementary client predictions under privacy and context length constraints.
Our theoretical analysis characterizes the effects of ripple ordering, multi-client collaboration, and nonuniform aggregation on estimation error.
Experiments on a public RME benchmark show that \textsc{Poly-Ripple} consistently outperforms representative ICL baselines, and the empirical results support our theoretical analysis.

% \appendix
\section*{Proofs of Theoretical Results}

\subsection{Proof of Lemma~\ref{lemma:onestep-error-multi}}

\begin{proof}
Fix a client $m \in [M]$, an output coordinate $i \in [d]$, and a
feature subset $\mathcal{S} \subseteq [d]$. Client $m$ observes $K$
i.i.d.\ demonstrations satisfying
$y_{i,k}^{m}=x_{\mathcal{S}}^{k\top}w_{\mathcal{S}}^{\star}
+\epsilon_{i,k}^{m}$.
Let $X_{\mathcal{S}} \in \mathbb{R}^{K\times|\mathcal{S}|}$ be the
stacked input matrix, and let $Y_{i,m},\varepsilon_{i,m}\in\mathbb{R}^K$
be the stacked outputs and noise, respectively. Hence,
$Y_{i,m}=X_{\mathcal{S}}w_{\mathcal{S}}^{\star}+\varepsilon_{i,m}$.
Starting from $w_{\mathcal{S},m}^{0}$, consider one gradient step on
$L_m(w)=\frac{1}{2K}\|Y_{i,m}-X_{\mathcal{S}}w\|_2^2$. Define
$G=\frac{1}{K}X_{\mathcal{S}}^\top X_{\mathcal{S}}$ and
$\delta_{\mathcal{S},m}
=w_{\mathcal{S},m}^{0}-w_{\mathcal{S}}^{\star}$.
Then,
$e_m
\triangleq
w_{\mathcal{S},m}^{1}-w_{\mathcal{S}}^{\star}
=
(I_{|\mathcal{S}|}-\eta G)\delta_{\mathcal{S},m}
+
\frac{\eta}{K}X_{\mathcal{S}}^\top\varepsilon_{i,m}.$
Next, consider an independent test sample satisfying
$y_i^{m}=x_{\mathcal{S}}^\top w_{\mathcal{S}}^\star+\epsilon_i^{m}$.
Its prediction residual is
$r_m=\epsilon_i^{m}-x_{\mathcal{S}}^\top e_m$. By the independence
and standardized-input assumptions,
$\mathbb{E}[r_m^2]
=
\sigma_i^2+\mathbb{E}\|e_m\|_2^2.$
Write $e_m=a_m+b_m$, where
$a_m=(I_{|\mathcal{S}|}-\eta G)\delta_{\mathcal{S},m}$ and
$b_m=\frac{\eta}{K}X_{\mathcal{S}}^\top\varepsilon_{i,m}$.
Conditioning on $X_{\mathcal{S}}$ gives
$\mathbb{E}[a_m^\top b_m]=0$ and
$\mathbb{E}\|b_m\|_2^2
=
\frac{\eta^2\sigma_i^2}{K^2}
\mathbb{E}\operatorname{tr}
(X_{\mathcal{S}}X_{\mathcal{S}}^\top)
=
\frac{\eta^2\sigma_i^2|\mathcal{S}|}{K}.$
Moreover, writing $G=I_{|\mathcal{S}|}+\Delta$ with
$\mathbb{E}[\Delta]=0$ yields
$a_m=(1-\eta)\delta_{\mathcal{S},m}
-\eta\Delta\delta_{\mathcal{S},m}$. Therefore,
$\mathbb{E}\|a_m\|_2^2
\le
\left(
(1-\eta)^2+\eta^2\mathbb{E}\|\Delta\|_F^2
\right)
\|\delta_{\mathcal{S},m}\|_2^2.$
For isotropic Gaussian inputs,
$\mathbb{E}\|\Delta\|_F^2
=|\mathcal{S}|(|\mathcal{S}|+1)/K$. Combining the above results gives
$\mathbb{E}\![
(
y_i^{m}-x_{\mathcal{S}}^\top w_{\mathcal{S},m}^{1}
)^2
]
\le
(
1+\frac{\eta^2|\mathcal{S}|}{K}
)\sigma_i^2
+
(
(1-\eta)^2
+
\frac{\eta^2|\mathcal{S}|(|\mathcal{S}|+1)}{K}
)
\|\delta_{\mathcal{S},m}\|_2^2,$
which proves the result.
\end{proof}

\subsection{Proof of Theorem~\ref{thm:total-error-multi}}
\label{proof:theorem1}

\begin{proof}
Fix a client $m \in [M]$. At step $t \in [d]$, apply
Lemma~\ref{lemma:onestep-error-multi} to coordinate $i=o(t)$ and
subset $\mathcal{S}=\mathcal{S}(t)$. Since
$|\mathcal{S}(t)|=t$, Assumption~\ref{ass:delta-const-multi} gives
$\mathbb{E}\![
(
y_{o(t)}^{m}
-
\boldsymbol{x}_{\mathcal{S}(t)}^{\top}
w_{\mathcal{S}(t),m}^{1}
)^2
]
\le
(1+\frac{\eta^2t}{K})\sigma_{o(t)}^2
+
(
(1-\eta)^2+\frac{\eta^2t(t+1)}{K}
)
\|\delta_m\|_2^2.$
Next, summing over $t=1,\ldots,d$ yields
$\sum_{t=1}^{d}
\mathbb{E}\![
(
y_{o(t)}^{m}
-
\boldsymbol{x}_{\mathcal{S}(t)}^{\top}
w_{\mathcal{S}(t),m}^{1}
)^2
] 
\le
\sum_{t=1}^{d}\sigma_{o(t)}^2
+
\frac{\eta^2}{K}
\sum_{t=1}^{d}t\sigma_{o(t)}^2
+
d(1-\eta)^2\|\delta_m\|_2^2
+
\frac{\eta^2\|\delta_m\|_2^2}{K}
\sum_{t=1}^{d}t(t+1).$
Since $o$ is a permutation,
$\sum_{t=1}^{d}\sigma_{o(t)}^2=\sum_{i=1}^{d}\sigma_i^2$.
Moreover,
$\sum_{t=1}^{d}t(t+1)=d(d+1)(d+2)/3$.
Substituting these identities into the preceding inequality gives
$\sum_{t=1}^{d}
\mathbb{E}\![
(
y_{o(t)}^{m}
-
\boldsymbol{x}_{\mathcal{S}(t)}^{\top}
w_{\mathcal{S}(t),m}^{1}
)^2
] 
\le
\sum_{i=1}^{d}\sigma_i^2
+
d(1-\eta)^2\|\delta_m\|_2^2
+
\frac{\eta^2\|\delta_m\|_2^2}{K}
\frac{d(d+1)(d+2)}{3}
+
\frac{\eta^2}{K}
\sum_{t=1}^{d}t\sigma_{o(t)}^2.$
% which is exactly~\eqref{eq:thm-total-multi}.
\end{proof}

\subsection{Proof of Theorem~\ref{thm:agg-total-error}}
\label{proof:theorem2}

\begin{proof}
Since every client follows the ripple order $o(t)=t$, define the
residual of client $m$ at step $t$ as
$r_t^m \triangleq y_t^m-
\boldsymbol{x}_{\mathcal{S}(t)}^\top w_{\mathcal{S}(t)}$.
Accordingly, the residual of the aggregated prediction is
$\bar r_t=\frac{1}{M}\sum_{m=1}^M r_t^m$.
By Assumption~\ref{ass:cross-client-uncorr}, the residuals of distinct
clients are uncorrelated. Therefore, all cross terms vanish, and
$\mathbb{E}[\bar r_t^2]
=
\frac{1}{M^2}
\sum_{m=1}^M
\mathbb{E}[(r_t^m)^2].$
Next, summing over all $d$ steps gives
$\sum_{t=1}^d \mathbb{E}[\bar r_t^2]
=
\frac{1}{M^2}
\sum_{m=1}^M
\sum_{t=1}^d
\mathbb{E}[(r_t^m)^2].$
For each client $m$, Corollary~\ref{corollary:ripple-multi} yields
$\sum_{t=1}^d
\mathbb{E}[(r_t^m)^2]
\le
\sum_{i=1}^d
(1+\frac{\eta^2 i}{K})\sigma_i^2
+
d(
(1-\eta)^2
+
\frac{\eta^2(d+1)(d+2)}{3K}
)
\|\delta_m\|_2^2.$
Finally, substituting this bound into the preceding equality gives
$\sum_{t=1}^d
\mathbb{E}\![
(
\frac{1}{M}\sum_{m=1}^M y_t^m
-
\boldsymbol{x}_{\mathcal{S}(t)}^\top
w_{\mathcal{S}(t)}
)^2
]
\le
\frac{1}{M^2}
\sum_{m=1}^M
[
\sum_{i=1}^d
(1+\frac{\eta^2 i}{K})\sigma_i^2
+
d(
(1-\eta)^2
+
\frac{\eta^2(d+1)(d+2)}{3K}
)
\|\delta_m\|_2^2
],$
which is exactly~\eqref{eq:agg-total-one-eq}.
\end{proof}

\subsection{Proof of Theorem~\ref{thm:total-error-agg}}
\label{proof:theorem3}

\begin{proof}
Since all clients follow the ripple order $o(t)=t$, define the
client residual at step $t$ as
$r_t^m \triangleq y_t^m-
\boldsymbol{x}_{\mathcal{S}(t)}^\top w_{\mathcal{S}(t)}$.
Under weighted aggregation,
$\hat y_t^{\mathcal A}=\sum_{m=1}^M\alpha_m y_t^m$, and hence the
aggregated residual is
$\tilde r_t=\sum_{m=1}^M\alpha_m r_t^m$.
By Assumption~\ref{ass:cross-client-uncorr}, the residuals of any two
distinct clients are uncorrelated. Therefore, the cross terms vanish,
and
$\mathbb{E}[\tilde r_t^2]
=
\sum_{m=1}^M
\alpha_m^2\mathbb{E}[(r_t^m)^2].$
Next, summing over $t=1,\ldots,d$ and exchanging the two summations
gives
$\sum_{t=1}^d\mathbb{E}[\tilde r_t^2]
=
\sum_{m=1}^M\alpha_m^2
\sum_{t=1}^d\mathbb{E}[(r_t^m)^2].$
For each client $m$, Corollary~\ref{corollary:ripple-multi} implies
$\sum_{t=1}^d\mathbb{E}[(r_t^m)^2]
\le
\sum_{i=1}^d
(1+\frac{\eta^2 i}{K})\sigma_i^2
+
d(
(1-\eta)^2+
\frac{\eta^2(d+1)(d+2)}{3K}
)\|\delta_m\|_2^2.$
Finally, substituting this bound into the preceding equality yields
$\sum_{t=1}^d
\mathbb{E}\![
(
\hat y_t^{\mathcal A}
-
\boldsymbol{x}_{\mathcal{S}(t)}^\top
w_{\mathcal{S}(t)}
)^2
]
\le
\sum_{m=1}^M\alpha_m^2
[
\sum_{i=1}^d
(1+\frac{\eta^2 i}{K})\sigma_i^2
+
d(
(1-\eta)^2+
\frac{\eta^2(d+1)(d+2)}{3K}
)\|\delta_m\|_2^2
],$
which is exactly~\eqref{eq:agg-total-wt-closed}.
\end{proof}

\subsection{Proof of Proposition~\ref{prop:alpha-optimal-after-thm3-v3}}
\label{proof:proposition2}

\begin{proof}
% Since $r_m^2>0$, the objective in \textbf{P1} is convex, while its feasible set is convex. 
Since $r_m^2>0$, \textbf{P1} is convex with a feasible set,
i.e.,
\textbf{P1} admits a unique global minimizer.
For any feasible weights satisfying
$\sum_{m=1}^{M}\alpha_m=1$, the Cauchy Schwarz inequality gives
$(\sum_{m=1}^{M}\alpha_m^2r_m^2)
(\sum_{m=1}^{M}\frac{1}{r_m^2})
\ge
(\sum_{m=1}^{M}\alpha_m)^2
=1.$
Consequently,
$\sum_{m=1}^{M}\alpha_m^2r_m^2
\ge
(\sum_{m=1}^{M}\frac{1}{r_m^2})^{-1}.$
Moreover, equality holds if and only if
$\alpha_m r_m \propto 1/r_m$, or equivalently,
$\alpha_m\propto 1/r_m^2$. Enforcing the constraint
$\sum_{m=1}^{M}\alpha_m=1$ therefore yields
$\alpha_m^\star
=
\frac{r_m^{-2}}
{\sum_{m'=1}^{M}r_{m'}^{-2}}
=
\frac{1}{r_m^2}
(
\sum_{m'=1}^{M}\frac{1}{r_{m'}^2}
)^{-1}, m\in[M].$
Finally, defining
$\kappa\triangleq
(\sum_{m'=1}^{M}1/r_{m'}^2)^{-1}$
gives $\alpha_m^\star=\kappa/r_m^2$, which is exactly
\eqref{eq:alpha-propto-with-kappa-v3}. Since $r_m^2>0$, we have
$\alpha_m^\star>0$ for every $m$, so all constraints in \textbf{P1}
are satisfied. 
% The strict convexity of the objective then guarantees that this solution is the unique global minimizer.
\end{proof}

{
% \small
\bibliographystyle{IEEEtran}
\bibliography{ref}
}

\newpage
\setcounter{page}{1}

\end{document}